\definecolor{mydarkblue}{rgb}{0,0.08,0.45}
\newtheorem{theorem}{Theorem}
\newtheorem{lemma}{Lemma}
                \let\value=\pgfmathresult
                \xdef\temp{%
                    \noexpand\pgfkeysalso{%
                        @cell content={%
                            \noexpand\cellcolor[rgb]{\pgfmathresult}%
                            \noexpand\definecolor{mapped color}{rgb}{\pgfmathresult}%
                            \ifx\textcolorvalue\empty
                            \else
                                \noexpand\color{\textcolorvalue}%
                            \fi
                            \the\toks0 %
                        }%
                    }%
                }%
\newcommand{\RNum}[1]{\uppercase\expandafter{\romannumeral #1\relax}}
\newcommand\restr[2]{{
  \left.\kern-\nulldelimiterspace 
  #1 
  \vphantom{\big|} 
  \right|_{#2} 
  }}
\DeclareMathOperator*{\argmax}{\arg\!\max}
\newcommand{\xhdr}[1]{{\noindent\bfseries #1}.}
\newcommand{\cut}[1]{}
\newcommand{\removelatexerror}{\let\@latex@error\@gobble}
\def\1{\bm{1}}
\DeclareMathAlphabet{\mathsfit}{\encodingdefault}{\sfdefault}{m}{sl}
\SetMathAlphabet{\mathsfit}{bold}{\encodingdefault}{\sfdefault}{bx}{n}
\def\gB{{\mathcal{B}}}
\def\gD{{\mathcal{D}}}
\def\gF{{\mathcal{F}}}
\def\gX{{\mathcal{X}}}
\def\gY{{\mathcal{Y}}}
\def\gZ{{\mathcal{Z}}}
\def\sF{{\mathbb{F}}}
\def\sP{{\mathbb{P}}}
\def\sR{{\mathbb{R}}}
\newcommand{\E}{\mathbb{E}}
\DeclareMathOperator{\sign}{sign}
\DeclareMathOperator{\supp}{supp}
\definecolor{medium-blue}{rgb}{0,0,1}
\newcounter{ggiCounter}
\renewcommand{\cite}[1]{\citep{#1}} 
\title{Adversarial Example Games}
\author{%
  Avishek Joey Bose\thanks{Equal Contribution, order chosen via randomization.} \\
  Mila, McGill University \\ 
  \texttt{joey.bose@mail.mcgill.ca} \\
  \And
  Gauthier Gidel$^*$ \\
  Mila, Université de Montréal \\
    \texttt{gauthier.gidel@umontreal.ca} \\
  \And
  Hugo Berard$^*$\\
  Mila, Université de Montréal \\
  Facebook AI Research
   \And
  Andre Cianflone \\
  Mila, McGill University\\
  \And
  Pascal Vincent\thanks{Canada CIFAR AI Chair} \\
  Mila, Université de Montréal\\
  Facebook AI Research
  \And
  Simon Lacoste-Julien$^{\dagger}$ \\
  Mila, Université de Montréal\\
  \And
  William L. Hamilton$^{\dagger}$\\
  Mila, McGill University \\
}
\definecolor{mistyrose}{rgb}{1, 0.92, 0.92}
\definecolor{grannysmithapple}{rgb}{0.79, 0.92, 0.77}
\definecolor{darkgrannysmithapple}{rgb}{0.79, 0.99, 0.77}
\definecolor{palette1}{HTML}{8FC771}
\definecolor{palette2}{HTML}{B4D375}
\definecolor{palette3}{HTML}{DADF79}
\definecolor{palette4}{HTML}{FFEB7D}
\definecolor{medium-blue}{rgb}{0,0,1}
\begin{document}

\maketitle

\begin{abstract}
The existence of adversarial examples capable of fooling trained neural network classifiers calls for a much better understanding of possible attacks to guide the development of safeguards against them. This includes attack methods in the challenging {\em non-interactive blackbox} setting, where adversarial attacks are generated without any access, including queries, to the target model. Prior attacks in this setting have relied mainly on algorithmic innovations derived from empirical observations (e.g., that momentum helps), lacking principled transferability guarantees. In this work, we provide a theoretical foundation for crafting transferable adversarial examples to entire hypothesis classes. We introduce \textit{Adversarial Example Games} (AEG), a framework that models the crafting of adversarial examples as a min-max game between a generator of attacks and a classifier. AEG provides a new way to design adversarial examples by adversarially training a generator and a classifier from a given hypothesis class (e.g., architecture). We prove that this game has an equilibrium, and that the optimal generator is able to craft adversarial examples that can attack any classifier from the corresponding hypothesis class. We demonstrate the efficacy of AEG on the MNIST and CIFAR-10 datasets, outperforming prior state-of-the-art approaches with an average relative improvement of $29.9\%$ and $47.2\%$ against undefended and robust models (Table \ref{table:q2} \& \ref{table:q3}) respectively.

\end{abstract}

\section{Introduction}

Adversarial attacks on deep neural nets expose critical vulnerabilities in traditional machine learning systems \citep{moosavi2016deepfool,athalye2018synthesizing,sun2018survey,bose2018adversarial}. 
In order to develop models that are robust to such attacks, it is imperative that we improve our theoretical understanding of different attack strategies. 
While there has been considerable progress in understanding the theoretical underpinnings of adversarial attacks in relatively permissive settings (e.g. {\em whitebox} adversaries; \cite{madry2017towards}), there remains a substantial gap between theory and practice in more demanding and realistic threat models.

In this work, we provide a theoretical framework for understanding and analyzing adversarial attacks in the highly-challenging {\em \textbf{No}n-interactive black\textbf{Box} adversary} (NoBox) setting,
where the attacker has no direct access, including input-output queries, to the target classifier it seeks to fool.
Instead, the attacker must generate attacks by optimizing against some representative classifiers, which are assumed to come from a similar hypothesis class as the target.

The NoBox setting is a much more challenging setting than more traditional threat models, yet it is representative of many real-world attack scenarios, where the attacker cannot interact with the target model \citep{carlini2019evaluating}. 
Indeed, this setting---as well as the general notion of transferring attacks between classifiers---has generated an increasing amount of empirical interest \citep{dong2019evading,liu2016delving,xie2019improving,wu2020skip}.
The field, however, currently lacks the necessary theoretical foundations to understand the feasibility of such attacks.  

\xhdr{Contributions}
To address this theoretical gap, we cast NoBox attacks as a kind of {\em adversarial example game} (AEG).
In this game, an attacker generates adversarial examples to fool a representative classifier from a given hypothesis class, while the classifier itself is trained to detect the correct labels from the adversarially generated examples. Our first main result shows that the Nash equilibrium of an AEG leads to a distribution of adversarial examples effective against {\em any} classifier from the given function class.
More formally, this adversarial distribution is guaranteed to be the most effective distribution for attacking the hardest-to-fool classifiers within the hypothesis class, providing a worst-case guarantee for attack success against an arbitrary target. 
We further show that this optimal adversarial distribution admits a natural interpretation as being the distribution that maximizes a form of restricted conditional entropy over the target dataset, and we provide detailed analysis on simple parametric models to illustrate the characteristics of this optimal adversarial distribution. Note that while AEGs are latent games~\citep{gidel2020minimax}, they are distinct from the popular generative adversarial networks (GANs)~\citep{goodfellow2014generative}. In AEGs, there is \emph{no} discrimination task between two datasets (generated one and real one); instead, there is a standard supervised (multi-class) classification task on an adversarial dataset.

Guided by our theoretical results we instantiate AEGs using parametric functions ---i.e. neural networks, for both the attack generator and representative classifier and show the game dynamics progressively lead to a stronger attacker and robust classifier pairs. We empirically validate AEG on standard CIFAR and MNIST benchmarks and achieve state-of-the-art performance ---compared to existing heuristic approaches--- in nearly all experimental settings (e.g., transferring attacks to unseen architectures and attacking robustified models), while also maintaining a firm theoretical grounding.

\cut{
Empirical results on standard CIFAR and MNIST benchmarks using models that directly instantiate the AEG framework show that we can achieve state-of-the-art performance compared to existing heuristic approaches.
In particular, we find that our framework achieves state-of-the-art performance in nearly all experimental settings (e.g., transferring attacks to unseen architectures and attacking robustified models in a blackbox setting), while also maintaining a firm theoretical grounding. }

\section{Background and Preliminaries}

Suppose we are given a classifier $f : \mathcal{X}\rightarrow \mathcal{Y}$, an input datapoint $x \in \mathcal{X}$, and a class label $y \in \mathcal{Y}$, where $f(x) = y$.
The goal of an adversarial attack is to produce an adversarial example $x' \in \mathcal{X}$, such that $f(x') \neq y$, and where the distance\footnote{We assume that the $\ell_\infty$ is used in this work, \cite{goodfellow2014explaining, madry2017towards} , but our results generalize to any distance $d$.} $d(x,x') \leq \epsilon$. 
Intuitively, the attacker seeks to fool the classifier $f$ into making the wrong prediction on a point $x'$, which is $\epsilon$-close to a real data example $x$. 

\xhdr{Adversarial attacks and optimality}
A popular setting in previous research is to focus on generating {\em optimal} attacks on a single classifier $f$ \citep{carlini2017magnet, madry2017towards}.
Given a loss function $\ell$, used to evaluate $f$, an adversarial attack is said to be optimal if, 
\begin{equation}\label{eq:opt_attack}
    \textstyle x' \in \argmax_{x'\in \gX}\ell(f(x'),y) \,, \quad \text{s.t.} \quad d(x,x') \leq \epsilon \,.
\end{equation}
In practice, attack strategies that aim to realize ~\eqref{eq:opt_attack} optimize adversarial examples $x'$ directly using the gradient of $f$. 
In this work, however, we consider the more general setting of generating attacks that are optimal against entire hypothesis classes $\mathcal{F}$, a notion that we formalize below. 

\subsection{NoBox Attacks}
Threat models specify the formal assumptions of an attack (e.g., the information the attacker is assumed to have access to), which is a core aspect of adversarial attacks.
For example, in the popular {\em whitebox} threat model, the attacker is assumed to have full access to the model $f$'s parameters and outputs
\citep{szegedy2013intriguing,goodfellow2014explaining,madry2017towards}.
In contrast, the {\em blackbox} threat model assumes restricted access to the model, e.g., only access to a limited number of input-out queries \citep{chen2017zoo,ilyas2017query, papernot2016transferability}.
Overall, while they consider different access to the target model, traditional whitebox and blackbox attacks both attempt to generate adversarial examples that are optimal for a specific target (i.e., Equation \ref{eq:opt_attack}). 

In this paper, we consider the more challenging setting of {\em \textbf{no}n-interactive black\textbf{Box} (\textbf{NoBox})} attacks, intending to generate successful attacks against an unknown target.
In the NoBox setting, we assume no interactive access to a target model; instead, we only assume access to a target dataset and knowledge of the function class to which a target model belongs. Specifically, the NoBox threat model relies on the following key definitions:
\begin{itemize}[leftmargin=*, itemsep=2pt, topsep=2pt, parsep=2pt]
\item 
\textbf{The target model $f_t$}. The adversarial goal is to attack some target model $f_t : \mathcal{X} \rightarrow \mathcal{Y}$, which belongs to an hypothesis class $\gF$. Critically, the adversary has \emph{no access} to $f_t$ \emph{at any time}.
Thus, in order to attack $f_t$, the adversary must develop attacks that are effective against the entirety of $\gF$. 
\item
\textbf{The target examples $\mathcal{D}$}. The dataset $\mathcal{D}$ contains the examples $(x,y)$ that attacker seeks to corrupt. 
\item
\textbf{An hypothesis class $\gF$}. As noted above, we  assume that the attacker has access to a hypothesis class $\gF$ to which the target model $f_t$ belongs.\footnote{Previous work~\citep{tramer2017ensemble} usually assumes to have access to the architecture of $f_t$; we are more general by assuming access to a hypothesis class $\gF$ containing $f_t$; e.g., DenseNets can represent ConvNets.}  
One can incorporate in $\gF$ as much prior knowledge one has on $f_t$ (e.g., the architecture, dataset, training method, or regularization), going from exact knowledge of the target $\gF = \{f_t\}$ to almost no knowledge at all (e.g., $\gF = \{f \in \text{DenseNets}\}$). 
\item \textbf{A reference dataset $\gD_\text{ref}$}. The reference dataset $\gD_\text{ref}$, which is {\em similar} to the training data of the target model (e.g., sampled from the same distribution) is used to reduce the size of the hypothesis class $\gF$ (e.g., we know that the target model perfoms well at classification on $\gD_\text{ref}$).

\item
\textbf{A representative classifier $f_c$}. Finally, we assume that the attacker has the ability to optimize a representative classifier $f_c$ from the hypothesis class $\gF$. 
\end{itemize}

Given these four key components, we formalize the NoBox setting as follows:
\begin{definition}\label{def:nobox}
The NoBox threat model corresponds to the setting where the attacker (i) knows a hypothesis class $\mathcal{F}$ that the target model $f_t$ belongs to, (ii) has access to a reference dataset $\gD_\text{ref}$ that is similar to the the dataset used to train $f_t$ (e.g., sampled from the same distribution), and (iii)  can optimize a representative classifier $f_c \in \mathcal{F}$. 
The attacker has no other knowledge of---or access to---the target model $f_t$ (e.g., no queries to $f_t$ are allowed). The goal is, for the attacker, to use this limited knowledge to corrupt the examples in a given target dataset $\gD$.
\end{definition}

Our definition of a NoBox adversary (Def.~\ref{def:nobox}) formalizes similar notions used in previous work (e.g., see Def.~3 in~\cite{tramer2017ensemble}).
Previous work also often refers to related settings as generating {\em blackbox transfer} attacks, since the goal is to attack the target model $f_t$ while only having access to a representative classifier $f_c$ \citep{dong2019evading,liu2016delving, xie2019improving}.

Note, that our assumptions regarding dataset access are relatively weak. 
Like prior work, the attacker is given the target data (i.e., the examples to corrupt) as input, but this is constitutive of the task (i.e., we need access to a target example in order to corrupt it). 
Our only assumption is to have access to a reference dataset $\gD_\text{ref}$, which is {\em similar} to the dataset used to train the target model. We do not assume access to the exact training set. 
A stronger version of this assumption is made in prior works on blackbox transfer, as these approaches must craft their attacks on a known source model which is pretrained on the same dataset as the target model \citep{tramer2017ensemble}.

\section{Adversarial Example Games}
\label{sec:adv_example_game}

In order to understand the theoretical feasibility of NoBox attacks, we  view the attack generation task as a form of {\em adversarial game}.
The players are the {\em generator} network $g$---which learns a conditional distribution over adversarial examples---and the representative classifier $f_c$.
The goal of the generator network is to learn a conditional distribution of adversarial examples, which can fool the representative classifier $f_c$.
The representative classifier $f_c$, on the other hand, is optimized to detect the true label $y$ from the adversarial examples $(x', y)$ generated by $g$. 
A critical insight in this framework is that the generator and the representative classifier are \emph{jointly} optimized in a maximin game, making the generator's adversarial distribution at the equilibrium theoretically effective against {\em any} classifier from the hypothesis class $\mathcal{F}$ that $f_c$ is optimized over. 
At the same time, we will see in Proposition~\ref{prop:minimax_maximin} that the min and max in our formulation~\eqref{eq:two_player_game} can be switched. It implies that, while optimized, the model $f_c$ converges to a {\em robust classifier} against any attack generated by the generator~$g$~\citep{madry2017towards,wald1945statistical}, leading to increasingly powerful attacks as the adversarial game progresses. 

\xhdr{Framework} Given an input-output pair of target datapoints $(x,y)\sim \gD$, the generator network $g$ is trained to learn a distribution of adversarial examples $p_\text{cond}(\cdot|x,y)$ that---conditioned on an example to attack $(x,y)$---maps a prior distribution $p_{z}$ on $\gZ$ onto a distribution on $\gX$.  
The classifier network $f_c$  is simultaneously optimized to perform robust classification over the resulting distribution $p_{g}$ defined in ~\eqref{eq:def_pg} (below). 
Overall, the generator $g$ and the classifier $f_c$ play the following, two-player zero-sum game: 
\begin{equation}\label{eq:two_player_game} \tag{AEG}
     \max_{g \in \mathcal{G}_\epsilon}\min_{f_c \in \mathcal{F}}\mathbb{E}_{(x,y) \sim \gD, z\sim p_{z}}[ \ell(f_c(g(x,y,z)),y)] =: \varphi(f_c,g),
\end{equation}
where the generator $g \in \mathcal{G}_\epsilon$ is restricted by the similarity constraint $d(g(x,y,z),x) \leq \epsilon \,,\,\forall x,y,z \in \gX\times \gY\times\gZ$.
Once the generator $g$ is trained, one can generate adversarial examples against any classifier in $f_t \in \gF$, without queries, by simply sampling $z \sim p_z$ and computing $g(x,y,z)$. 

\xhdr{Connection with NoBox attacks}
The NoBox threat model (Def.~\ref{def:nobox}) corresponds to a setting where the attacker does not know the target model $f_t$ but only a hypothesis class $\gF$ such that $f_t \in \gF$. With such knowledge, one cannot hope to be better than the \emph{most pessimistic situation} where $f_t$ is the best defender in $\gF$. Our maximin formulation~\eqref{eq:two_player_game} encapsulates such a worst-case scenario, where the generator aims at finding attacks against the best performing $f$ in $\gF$.

\xhdr{Objective of the generator} When trying to attack infinite capacity classifiers---i.e., $\mathcal{F}$ contains any measurable function---the goal of the generator can be seen as generating the adversarial distribution $p_{g}$ with the highest expected conditional entropy $\E_x[ \sum_y p_{g}(y|x) \log p_{g}(y|x)]$, where $p_{g}$ is defined as
\vspace{-2mm}
\begin{equation}\label{eq:def_pg}
    (x',y) \sim p_{g} \Leftrightarrow x'= g(x,y,z)\,,\;(x,y) \sim \gD  \,,\,z\sim p_z
    \quad \text{with} \quad
    d(x',x) \leq \epsilon \, .
\end{equation}
When trying to attack a specific hypothesis class $\mathcal{F}$ (e.g., a particular CNN architecture), the generator aims at maximizing a notion of restricted entropy defined implicitly through the class $\mathcal{F}$.
Thus, the optimal generator in an~\eqref{eq:two_player_game} is primarily determined by the statistics of the target dataset $\mathcal{D}$ itself, rather any specifics of a target model. 
We formalize these high level concepts in \S\ref{sub:non_param}.

\xhdr{Regularizing the Game} In practice, the target $f_t$ is usually trained on a non-adversarial dataset and performs well at a standard classification task. 
In order to reduce the size of the class $\mathcal{F}$, one can bias the representative classifier $f_c$ towards performing well on a standard classification task with respect to $\gD_\textrm{ref}$, which leads to the following game:
\begin{equation}\label{eq:two_player_game_plus_clean}
         \max_{g \in \mathcal{G}_\epsilon}\min_{f_c \in \mathcal{F}}\mathbb{E}_{(x,y) \sim \gD, z\sim p_{z}}[ \ell(f_c(g(x,y,z)),y)] + \lambda \mathbb{E}_{(x,y) \sim \gD_\textrm{ref}}[\ell(f_c(x),y)] =: \varphi_\lambda(f,g).
\end{equation}
Note that $\lambda=0$ recovers~\eqref{eq:two_player_game}.
Such modifications in the maximin objective as well as setting the way the models are trained (e.g., optimizer, regularization, additional dataset) biases the training of the $f_c$ and corresponds to an implicit incorporation of prior knowledge on the target $f_t$ in the hypothesis class $\gF$. We note that in practice, using a non-zero value for $\lambda$ is essential to achieve the most effective attacks as the prior knowledge acts as a regularizer that incentivizes $g$ to craft attacks against classifiers that behave well on data similar to  $\gD_\textrm{ref}$.
\section{Theoretical results}
When playing an adversarial example game, the generator and the representative classifier try to beat each other by maximizing their own objective. In games, a standard notion of optimality is the concept of Nash equilibrium~\citep{nash1951non} where each player cannot improve its objective value by unilaterally changing its strategy. The minimax result in Prop.~\ref{prop:minimax_maximin} implies the existence of a Nash equilibrium for the game, consequently providing a well defined target for learning (we want to learn the equilibrium of that game). 
Moreover, a Nash equilibrium is a stationary point for gradient descent-ascent dynamics; we can thus hope for achieving such a solution by using a gradient-descent-ascent-based learning algorithm on~\eqref{eq:two_player_game}.\footnote{Note that, similarly as in practical GANs training, when the classifier and the generator are parametrized by neural networks, providing convergence guarantees for a gradient based method in such a nonconvex-nonconcave minimax game is an open question that is outside of the scope of this work.}

\begin{proprep}\label{prop:minimax_maximin} If $\ell$ is convex (e.g., cross entropy or mean squared loss), the distance $x \mapsto d(x,x')$ is convex for any $x' \in \gX$, one has access to any measurable $g$ respecting the proximity constraint in~\eqref{eq:def_pg}, and the hypothesis class $\mathcal{F}$ is convex, then we can switch min and max in~\eqref{eq:two_player_game}, i.e.,
\begin{equation}\label{eq:minimax}
        \min_{f_c \in \mathcal{F}} \max_{g \in \mathcal{G}_\epsilon}\varphi_\lambda(f_c,g)
        = 
       \max_{g \in \mathcal{G}_\epsilon}  \min_{f_c \in \mathcal{F}}\varphi_\lambda(f_c,g)
\end{equation}
\end{proprep}
\begin{proofsketch}We first notice that, by~\eqref{eq:def_pg} any $g$ corresponds to a distribution $p_g$ and thus we have,
\begin{equation}
    \varphi(f_c,g) := \mathbb{E}_{(x,y) \sim \gD, z\sim p_{z}}[ \ell(f_c(g(x,y,z)),y)] = \mathbb{E}_{(x',y) \sim p_g}[ \ell(f_c(x'),y)] =: \varphi(f_c,p_g) \notag
\end{equation}
Consequently, we also have $\varphi_\lambda(f_c,g) = \varphi_\lambda(f_c,p_g)$. By noting $\Delta_\epsilon := \{p_g \,:\, g \in \mathcal G_\epsilon\}$, we have that,
\begin{equation}
     \min_{f_c \in \mathcal{F}} \max_{p_g \in \Delta_\epsilon}\varphi_\lambda(f_c,p_g)
     =  \min_{f_c \in \mathcal{F}} \max_{g \in \mathcal{G}_\epsilon}\varphi_\lambda(f_c,g)
      \;\; \text{and} \;\;
       \max_{p_g \in \Delta_\epsilon} \min_{f_c \in \mathcal{F}}\varphi_\lambda(f_c,p_g)
     = \max_{g \in \mathcal{G}_\epsilon} \min_{f_c \in \mathcal{F}} \varphi_\lambda(f_c,g)
        \notag
\end{equation}
In other words, we can replace the optimization over the generator $g \in \mathcal{G}_\epsilon$ with an optimization over the set of possible adversarial distributions $\Delta_\epsilon$ induced by any $g \in \mathcal{G}_\epsilon$.
This equivalence holds by the construction of $\Delta_\epsilon$, which ensures that
$\max_{g \in \mathcal{G}_\epsilon}\varphi_\lambda(f_c, g) = \max_{p_g \in \Delta_\epsilon}\varphi_\lambda(f_c, p_g)$ for any $f_c \in \sF$.

We finally use Fan's theorem~\citep{fan1953minimax} after showing that $(f_c,p_g) \mapsto \varphi_\lambda(f_c,p_g)$ is convex-concave (by convexity of $\ell$ and linearity of $p_g \mapsto \E_{p_g}$) and that $\Delta_\epsilon$ is a compact convex set.
In particular, $\Delta_\epsilon$ is compact convex under the assumption that we can achieve any measurable $g$ (detailed in \S\ref{appendix}). 
\end{proofsketch}
\begin{appendixproof}

Let us recall that the payoff $\varphi$ is defined as 
\begin{equation}
     \max_{g \in \mathcal{G}_\epsilon}\min_{f \in \mathcal{F}}\mathbb{E}_{(x,y) \sim p_{data}, z\sim p_{z}}[ \ell(f(g(x,y,z)),y)] =: \varphi(f,g)
\end{equation}

Let us consider the case where $\gX$ is finite as a warm-up. In practice, this can be the case if we consider that for instance one only allows a finite number of values for the pixels, e.g. (integers between 0 and 255 for CIFAR-10). In that case, we have that 
\begin{equation}\label{eq:sum}
     \sP_{adv}(x,y) = \sum_{x' \in \gX} \sP_{data}(x',y) \sP_g(x|x',y) 
     \quad \text{where} \quad d(x,x') > \epsilon \Rightarrow \sP_g(x|x',y)= 0  \,.
\end{equation}
Assuming that one can achieve any $\sP_g(\cdot|x',y)$ respecting the proximity constraint, the set $\{\sP_{adv}\}$ is convex and compact. It is compact because closed and bounded in finite dimension and convex because of the linear dependence in $\sP_g$ in~\eqref{eq:sum} (and the fact that if $\sP_{g_1}$ and $\sP_{g_2}$ respect the constraints then $\lambda \sP_{g_1} + (1-\lambda) \sP_{g_2}$ does it too).

For the non finite input case, we can consider that the generator is a random variable defined on the probability space $(\gX\times \gY\times \gZ, \gB, \sP_{(x,y)} \times \sP_z := \sP)$
where $\gB$ is the Borel $\sigma$-algebra, $\sP_{(x,y)}$ the probability on the space of data and $\sP_z$ the probability on the latent space. Then the adversarial distributions $p_g$ we consider is the pushforward distributions $\sP \circ G^{-1}$ with $G$ such that, 
\begin{equation}\label{eq:constraints}
    G(x,y,z) = (g(x,y,z),y) 
    \quad \text{and} \quad
    d(g(x,y,z),x) \leq \epsilon \,,\, \forall x,y,z \in \gX\times \gY\times \gZ \,.
\end{equation}
It implies that $\Delta_\epsilon := \{p_g \,:\, g \in \mathcal G_\epsilon\} = \{\sP \circ G^{-1} \,|\, G \text{ measurable satisfying}~\eqref{eq:constraints}\}$. By definition of the payoff $\varphi$ and~\eqref{eq:def_pg}, we have that 
\begin{equation}
    \varphi(f_c,g) := \mathbb{E}_{(x,y) \sim \gD, z\sim p_{z}}[ \ell(f_c(g(x,y,z)),y)] = \mathbb{E}_{(x',y) \sim p_g}[ \ell(f_c(x'),y)] =: \varphi(f_c,p_g) \notag 
\end{equation}
and thus it lead to the equivalence, 
\begin{equation}
     \min_{f_c \in \mathcal{F}} \max_{p_g \in \Delta_\epsilon}\varphi_\lambda(f_c,p_g)
        = 
       \max_{p_g \in \Delta_\epsilon}  \min_{f_c \in \mathcal{F}}\varphi_\lambda(f_c,p_g)
       \Longleftrightarrow
       \min_{f_c \in \mathcal{F}} \max_{g \in \mathcal{G}_\epsilon}\varphi_\lambda(f_c,g)
        = 
       \max_{g \in \mathcal{G}_\epsilon}  \min_{f_c \in \mathcal{F}}\varphi_\lambda(f_c,g) \notag
\end{equation}

Recall that we assumed that one has access to any measurable $g$ that satisfies~\eqref{eq:constraints}, the minimax problem~\eqref{eq:minimax}. Let us show that under this assumption the set $\Delta_\epsilon$ is convex and compact.

$\Delta_\epsilon$ is convex: let us consider $\sP \circ G_1^{-1}$ and $\sP \circ G_2^{-1}$ we have that 
\begin{equation}
    \lambda\sP \circ G_1^{-1} + (1-\lambda)\sP \circ G_2^{-1}  = \sP \circ G_3^{-1}
\end{equation}
where $g_3(x,y,z) = \delta(z) g_1(x,y,z) + (1-\delta(z)) g_2(x,y,z)$ and where $\delta \sim Ber(\lambda)$. Note that $g_3$ satisfies~\eqref{eq:constraints} by convexity of $x \mapsto d(x,x')$.

$\Delta_\epsilon$ is compact: By using Skorokhod's representation theorem~\citep{billingsley1999convergence} we can show that this set is closed and thus compact (as closed subsets of compact sets are compact) using the weak convergence of measures as topology.

Thus $\Delta_\epsilon$ is a convex compact Haussdorf space and in both cases ($\gX$ finite and infinite) and we can apply Fan's Theorem.
\begin{theorem}\citep[Theorem 2]{fan1953minimax}
Let $U$ be a compact and convex Hausdorff space and $V$ an arbitrary convex set. Let $\varphi$ be a real valued function on $U \times V$ such that for every $v\in V$ the function $\varphi(\cdot,v)$ is lower semi-continuous on $U$.  If $\varphi$ is convex-concave then,
\begin{equation}
    \min_{u \in U} \sup_{v \in V} \varphi(u,v) = \sup_{v \in V} \min_{u \in U} \varphi(u,v)
\end{equation}
\end{theorem}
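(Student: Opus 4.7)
The plan is to apply Fan's minimax theorem after reparameterizing the generator $g$ by the distribution $p_g$ it induces on $\gX \times \gY$. First I would observe that the $g$-dependent part of $\varphi_\lambda$ factors through $p_g$ defined in~\eqref{eq:def_pg}, since $\E_{(x,y)\sim \gD,\, z\sim p_z}[\ell(f_c(g(x,y,z)),y)] = \E_{(x',y)\sim p_g}[\ell(f_c(x'),y)]$, while the $\gD_\text{ref}$ term is independent of $g$. Letting $\Delta_\epsilon := \{p_g : g \in \gG_\epsilon\}$, I rewrite both sides of the target identity in terms of $(f_c, p_g) \in \gF \times \Delta_\epsilon$, reducing the claim to an analogous minimax equality for the functional $\tilde\varphi_\lambda(f_c, p_g) := \E_{(x',y)\sim p_g}[\ell(f_c(x'),y)] + \lambda \E_{(x,y)\sim\gD_\text{ref}}[\ell(f_c(x),y)]$.

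Next I would check the structural hypotheses of Fan's theorem on $\gF \times \Delta_\epsilon$. Convexity of $\tilde\varphi_\lambda(\cdot, p_g)$ in $f_c$ follows from convexity of $\ell$ in its first argument combined with linearity of expectation; concavity in $p_g$ is immediate, in fact $p_g \mapsto \tilde\varphi_\lambda(f_c, p_g)$ is linear. Convexity of $\gF$ is given. For convexity of $\Delta_\epsilon$, given $p_{g_1}, p_{g_2} \in \Delta_\epsilon$ and $t \in [0,1]$ I would construct a mixture generator of the form $g_3(x,y,z) = \delta\, g_1(x,y,z) + (1-\delta)\, g_2(x,y,z)$ with an auxiliary Bernoulli$(t)$ variable $\delta$ absorbed into the latent input, giving $p_{g_3} = t\, p_{g_1} + (1-t)\, p_{g_2}$; the proximity constraint $d(g_3(x,y,z), x) \le \epsilon$ is preserved by convexity of $x' \mapsto d(x', x)$.

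The main obstacle will be compactness of $\Delta_\epsilon$. In the discrete setting (finite $\gX$, e.g., integer pixel values), $\Delta_\epsilon$ is a closed bounded subset of a finite-dimensional simplex of conditionals satisfying a linear support constraint, hence compact. In the general setting I would topologize $\Delta_\epsilon$ by weak convergence of probability measures and prove closedness via Skorokhod's representation theorem~\citep{billingsley1999convergence}: any weakly convergent sequence $p_{g_n}$ admits almost surely convergent realizations $G_n \to G^\star$ on a common probability space, and the closed $\epsilon$-proximity constraint passes to the pointwise limit so $G^\star \in \gG_\epsilon$. Tightness is automatic because every $p_g$ is supported in an $\epsilon$-neighborhood of the (tight) distribution of $\gD$, so Prokhorov's theorem yields relative compactness, and combined with closedness, compactness of $\Delta_\epsilon$.

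Finally I would invoke Fan's theorem: $\tilde\varphi_\lambda$ is convex-concave, $\gF$ is convex, $\Delta_\epsilon$ is convex and compact Hausdorff, and lower semi-continuity of $f_c \mapsto \tilde\varphi_\lambda(f_c, p_g)$ on $\gF$ holds under the standard regularity of $\ell$. The theorem then gives $\min_{f_c \in \gF} \max_{p_g \in \Delta_\epsilon} \tilde\varphi_\lambda = \max_{p_g \in \Delta_\epsilon} \min_{f_c \in \gF} \tilde\varphi_\lambda$, and translating back through the reparameterization of the first step yields the claimed identity on $g \in \gG_\epsilon$. The conceptually delicate piece is the weak-topology compactness argument; everything else is a direct verification of hypotheses.
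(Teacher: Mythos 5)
The statement you were asked to prove is Fan's minimax theorem itself---the abstract result that for a convex--concave, lower semi-continuous $\varphi$ on $U\times V$ with $U$ compact convex Hausdorff and $V$ an arbitrary convex set, $\min_{u}\sup_{v}\varphi = \sup_{v}\min_{u}\varphi$. Your proposal does not prove this: its final step is ``invoke Fan's theorem,'' so as a proof of the quoted statement it is circular. The paper does not prove this theorem either; it imports it verbatim as \citep[Theorem 2]{fan1953minimax} and uses it as the last ingredient in the proof of Proposition~\ref{prop:minimax_maximin}. A genuine proof of the quoted statement would have to argue at the level of the abstract spaces $U$ and $V$ (e.g., a separation or KKM/fixed-point argument in the style of Fan's original paper), with no reference to generators, adversarial distributions, or the loss $\ell$.

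What you have actually written is a proof of Proposition~\ref{prop:minimax_maximin}, and as such it tracks the paper's appendix argument essentially step for step: reparameterize $g$ by the pushforward $p_g$, establish convexity of $\Delta_\epsilon$ via a Bernoulli-randomized mixture generator, establish compactness in the weak topology via Skorokhod's representation theorem, and then apply Fan's theorem. Your version is in one respect more complete than the paper's: you supply the tightness/Prokhorov step needed for relative compactness of $\Delta_\epsilon$, which the paper leaves implicit (it only asserts closedness via Skorokhod and then that closed subsets of compact sets are compact). One small remark: convexity of $x'\mapsto d(x',x)$ is not actually needed for the Bernoulli mixture, since $\delta\in\{0,1\}$ makes $g_3$ pointwise equal to either $g_1$ or $g_2$, each of which already satisfies the proximity constraint; convexity of $d$ would only matter for a deterministic convex combination of generators. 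If the intended target was Proposition~\ref{prop:minimax_maximin}, your route coincides with the paper's; if it was the quoted theorem, the proof is missing entirely.
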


Note that we do not prove this result for neural networks.
\end{appendixproof}

The convexity assumption on the hypothesis class $\mathcal{F}$, Prop.~\ref{prop:minimax_maximin} applies in two main cases of interest:
\begin{enumerate*}[itemjoin = \;\,, label=(\roman*)]
    \item infinite capacity, i.e., when $\gF$ is any measurable function. 
    \item linear classifiers with \emph{fixed} features $\psi: \gX \to \sR^p$, i.e., $\gF = \{ w^\top \psi(\cdot) \,,\, w \in \sR^{|\gY| \times p} \}$. This second setting is particularly useful to build intuitions on the properties of~\eqref{eq:two_player_game}, as we will see in \S\ref{sub:logistic_reg} and Fig.~\ref{fig:self_info}.
\end{enumerate*}
The assumption that we have access to any measurable $g$, while relatively strong, is standard in the literature and is often stated in prior works as ``if $g$ has enough capacity''~\citep[Prop. 2]{goodfellow2014explaining}. Even if the class of neural networks with a fixed architecture do not verify the assumption of this proposition, the key idea is that neural networks are good candidates to approximate that equilibrium because they are universal approximators~\citep{hornik1991approximation} and they form a set that is ``almost convex"~\citep{gidel2020minimax}. Proving a similar minimax theorem by only considering neural networks is a challenging problem that has been considered by~\citet{gidel2020minimax} in a related setting. It requires a fined grained analysis of the property of certain neural network architecture and is only valid for approximate minimax. We believe such considerations outside of the scope of this work.

\subsection{A simple setup: binary classification with logistic regression}
\label{sub:logistic_reg}
Let us now consider a binary classification setup where $\gY = \{\pm 1\}$ and $\gF$ is the class of linear classifiers with linear features, i.e $f_w(x) = w^\top x$. In this case, the payoff of the game~\eqref{eq:two_player_game} is,
\begin{equation}\label{eq:two_player_logistic}
    \varphi(f_\omega,g):= \mathbb{E}_{(x,y)\sim \gD,\, z\sim p_z}[ \log(1+e^{-y \cdot w^\top g(x,y,z)})]
\end{equation}
This example is similar to the one presented in~\citep{goodfellow2014explaining}. However, our purpose is different since we focus on characterizing the optimal generator in~\eqref{eq:minimax}.
We show that the optimal generator can attack any classifier in $\gF$ by shifting the means of the two classes of the dataset $\gD$. 
\begin{proprep}\label{prop:adv_reg} If the generator is allowed to generate any $\ell_\infty$ perturbations. The optimal linear representative classifier is the solution of the following $\ell_1$ regularized logistic regression
\begin{equation}\label{eq:minimzation_robust_log}
   w^* \in \arg \min_{w} \mathbb{E}_{(x,y)\sim \gD}[ \log(1+e^{-y \cdot w^\top x + \epsilon\|w\|_1})] \,.
\end{equation}
 Moreover if $\omega^*$ has no zero entry, the optimal generator is $g^*(x,y) = x - y \cdot \epsilon \sign(w^*)$, is \emph{deterministic} and the pair $(f_{w^*},g^*)$ is a Nash equilibrium of the game~\eqref{eq:two_player_logistic}.
\end{proprep}
\begin{appendixproof}
We prove the result here for any given norm $\|\cdot\|$.
Let us consider the loss for a given pair $(x,y)$
\begin{equation}
    \log(1+e^{y(w^\top g(x,y) + b)})
\end{equation}
then by the fact that $x\mapsto \log(1+e^x)$ is increasing, maximizing this term for $\|g(x,y) - x\|_\infty \leq  \epsilon$, boils down to solving the following maximization step,
\begin{equation}
    \max_{\delta\,,\, \|\delta\| \leq \epsilon } y(w^\top \delta) = \|w\|_*
\end{equation}
Particularly, for the $\ell_\infty$ norm we get 
\begin{equation}
    \arg \max_{\delta\,,\, \|\delta\|_\infty \leq \epsilon } y(w^\top \delta) = \epsilon y \sign(w) \,.
\end{equation}
and
\begin{equation}
   \max_g \mathbb{E}_{(x,y)\sim p_{data}}[ \log(1+e^{y(w^\top g(x,y) + b)})]
   = \mathbb{E}_{(x,y)\sim p_{data}}[ \log(1+e^{y(w^\top x + b) + \epsilon\|w\|_1 })]
\end{equation}

To show that $(f^*,g^*)$ is a Nash equilibrium of the game~\eqref{eq:two_player_logistic}, we first notice that by construction
\begin{equation}
    \varphi(f^*,g^*) = \min_{f \in \mathcal{F}} \max_g \varphi(f,g)
\end{equation}
where $\mathcal{F}$ is the class of classifier with linear logits. We then just need to notice that for all $f \in \mathcal{F}$ we have,
\begin{equation}\label{eq:conv_pb}
    \varphi(f,g^*) =  \mathbb{E}_{(x,y)\sim p_{data}}[ \log(1+e^{-y(w^\top x + b) + \epsilon w^\top \sign(w^*)})]
\end{equation}
That is a convex problem in $(w,b)$. Since, in a neighborhood of $w^*$ we have that $w^\top \sign(w^*) = \|\restr{w}{\supp(w^*)}\|_1$. Thus by assuming that $w^*$ is full support and since $w^*$ minimize~\eqref{eq:minimzation_robust_log} we have that 
\begin{equation}
    \nabla_w \varphi(w^*,b^*,g^*) = \nabla_{w} \mathbb{E}_{(x,y)\sim p_{data}}[ \log(1+e^{-y(w^\top x + b) + \epsilon\|w\|_1})] = 0
\end{equation}
Finally, by convexity of the problem~\eqref{eq:conv_pb} we can conclude that $w^*$ is a minimizer of $\varphi(\cdot,g^*)$. To sum-up we have that
\begin{equation}
    \varphi(f^*,g) \leq \varphi(f^*,g^*) \leq \varphi(f,g^*)
\end{equation}
meaning that $(f^*,g^*)$ is a Nash equilibrium of~\eqref{eq:two_player_logistic}.
\end{appendixproof}
A surprising fact is that, unlike in the general setting of Prop.~\ref{prop:minimax_maximin}, the generator in Prop.\ref{prop:adv_reg} is deterministic (i.e., does not depend on a latent variable $z$).\footnote{Note also that one can generalize Prop.~\ref{prop:adv_reg} to a perturbation with respect to a general norm $\|\cdot\|$, in that case, the $\epsilon$-regularization for the classifier would be with respect to the dual norm $\|\cdot \|_* := \max_{\|u\| \leq 1} \langle\cdot,u \rangle $. E.g., as previously noted by~\citet{goodfellow2014explaining}, $\ell_\infty$ adversarial perturbation leads to a $\ell_1$ regularization.} This follows from the simple structure of classifiers in this class, which allow for a closed form solution for $g^*$. In general, one cannot expect to achieve an equilibrium with a deterministic generator.
Indeed, with this example, our goal is simply to illustrate how the optimal generator can attack an entire class of functions with limited capacity: linear classifiers are mostly sensitive to the mean of the distribution of each class; the optimal generator exploits this fact by moving these means closer to the decision boundary.

\subsection{General multi-class classification}\label{sub:non_param}
In this section, we show that, for a given hypothesis class $\gF$, the generated distribution achieving the global maximin against $f_c \in \gF$ can be interpreted as the distribution with the highest $\mathcal{F}$-entropy. For a given distribution $p_g$, its $\mathcal{F}$-entropy is the minimum expected risk under $p_g$ one can achieve in $\gF$. 
\begin{definition}\label{def:F_entropy}
For a given distribution $(x,y) \sim p_{g}$ we define the $\mathcal{F}$-entropy of $p_g$ as
\begin{equation}
    H_{\mathcal{F}}(p_{g}) := \min_{f_c \in \mathcal{F}} \E_{(x,y) \sim p_g} [ \ell(f_c(x),y)]
    \qquad \text{where $\ell$ is the cross entropy loss.}
\end{equation}
\vspace{-5mm}
\end{definition}
Thus $\mathcal{F}$-entropy quantifies the amount of ``classification information" available in $p_g$ using the class of classifiers $\mathcal{F}$. If the $\gF$-entropy is large, $(x,y) \sim p_g$ cannot be easily classified with a function $f_c$ in $\gF$.
Moreover, it is an upper-bound on the \emph{expected conditional entropy} of the distribution $p_g$.
\begin{proprep} \label{prop:self_info}
The $\mathcal{F}$-entropy is a decreasing function of $\mathcal{F}$, i.e., for any $\mathcal{F}_1 \subset \mathcal{F}_2$,
\begin{equation}
    H_{\mathcal{F}_1}(p_g) \geq H_{\mathcal{F}_2}(p_g)  \geq H_y(p_g):= \E_{x\sim p_x}[H(p_g(\cdot|x))]\,.
    \notag
\end{equation}
where $H(p(\cdot|x)) := \sum_{y \in \gY} p(y|x) \ln p(y|x)$ is the entropy of the conditional distribution $p(y|x)$.
\end{proprep}
\begin{appendixproof}
In the case where $f^*(x) :=p(y|x) \in \mathcal{F}$, since $f^*$ a minimizer of the expected cross entropy loss over the class of any function, we have that 
\begin{equation}\label{eq:cond_expected_entropy}
    H_y(p) := \min_{f\in \gX^{\gY}} \E_{(x,y) \sim p} [ \ell(f(x),y)] = \E_x[H(p(\cdot|x))]
\end{equation}

\begin{lemma} Given a data distribution $(x,y) \sim p_{adv}$ a minimizer of the cross entropy loss is 
\begin{equation}
     p_{adv}(y|\cdot)  \in \arg \min_{f} \E_{(x,y)\sim p_{adv}}[\ell(f(x),y)] \,.
\end{equation}
\end{lemma}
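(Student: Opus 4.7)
The plan is to reduce the claim to Gibbs' inequality applied pointwise in $x$. First I would rewrite the objective using the tower property of conditional expectation:
\begin{equation}
    \E_{(x,y) \sim p_{adv}}[\ell(f(x),y)] = \E_{x \sim p_{adv}}\!\left[\, \E_{y \sim p_{adv}(\cdot \mid x)}[\ell(f(x),y)] \,\right] = \E_{x \sim p_{adv}}\!\left[ -\sum_{y \in \gY} p_{adv}(y\mid x)\log f(x)_y \right],
\end{equation}
where $f(x) \in \Delta^{|\gY|}$ is viewed as a probability vector over classes (this is implicit in using the cross-entropy loss). Since the inner quantity at each $x$ depends only on the value of $f$ at that single point and since $f$ ranges over all measurable functions $\gX \to \Delta^{|\gY|}$, the problem decouples: one may minimize the integrand pointwise in $x$.

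Next, for each fixed $x$, the integrand is precisely the cross entropy $H\!\bigl(p_{adv}(\cdot \mid x),\, f(x)\bigr)$ between the true conditional $p_{adv}(\cdot \mid x)$ and the candidate distribution $f(x)$. I would then invoke Gibbs' inequality, equivalently the non-negativity of the Kullback--Leibler divergence, in the standard identity
\begin{equation}
    H\!\bigl(p_{adv}(\cdot\mid x),\, f(x)\bigr) = H\!\bigl(p_{adv}(\cdot\mid x)\bigr) + \KL\!\bigl(p_{adv}(\cdot\mid x)\,\|\, f(x)\bigr) \geq H\!\bigl(p_{adv}(\cdot\mid x)\bigr),
\end{equation}
with equality when $f(x) = p_{adv}(\cdot\mid x)$. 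This shows that choosing $f^*(x) := p_{adv}(\cdot \mid x)$ achieves the pointwise lower bound, so taking expectation over $x$ gives that $f^*$ is a global minimizer of the original cross-entropy risk. Plugging this minimizer back recovers $\E_x[H(p_{adv}(\cdot\mid x))]$, which is consistent with the expression \eqref{eq:cond_expected_entropy} stated just above the lemma.

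There is essentially no technical obstacle: the argument is a one-line pointwise minimization plus Gibbs. The only subtlety worth flagging is measurability --- namely that the pointwise minimizer $x \mapsto p_{adv}(\cdot \mid x)$ is itself a measurable function and hence an admissible $f$, which is immediate from the measurability of conditional distributions (regular conditional probabilities on standard Borel spaces). I would also remark that the minimizer is unique $p_{adv}$-almost everywhere on the support of the marginal $p_{adv}(x)$, since equality in Gibbs' inequality forces $f(x) = p_{adv}(\cdot\mid x)$ at those $x$; on the complement $f$ is unconstrained, which is why the statement is phrased with ``$\in \arg\min$'' rather than ``$= \arg\min$''.
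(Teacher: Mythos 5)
Your proof is correct, and it shares the paper's first step --- conditioning on $x$ and reducing the problem to a pointwise minimization over the probability simplex at each $x$ --- but finishes differently. The paper handles the inner minimization by brute force: it eliminates the simplex constraint via $q_K = 1-\sum_{i<K} q_i$, differentiates, and observes that $q_i = p_i$ is a stationary feasible point. Strictly speaking that only exhibits a critical point; global optimality rests on the (unstated) convexity of $q \mapsto -\sum_i p_i \ln q_i$ on the simplex. You instead invoke the decomposition $H\bigl(p_{adv}(\cdot\mid x), f(x)\bigr) = H\bigl(p_{adv}(\cdot\mid x)\bigr) + \KL\bigl(p_{adv}(\cdot\mid x)\,\|\, f(x)\bigr)$ and Gibbs' inequality, which delivers global optimality in one line, directly recovers the identity $\min_f \E[\ell(f(x),y)] = \E_x[H(p_{adv}(\cdot\mid x))]$ that the lemma is meant to support, and additionally characterizes the minimizer as unique $p_{adv}$-a.e.\ on the support of the marginal. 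Your remarks on measurability of $x \mapsto p_{adv}(\cdot\mid x)$ and on why the statement uses ``$\in \argmin$'' are also points the paper glosses over. In short: same decomposition, but your finishing argument is cleaner and slightly more complete than the paper's calculus computation, at the cost of invoking a (standard) information-theoretic inequality rather than staying purely elementary.
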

\begin{proof}
Let us start by noticing that,
\begin{equation}
    \min_f \E_{(x,y)\sim p}[\ell(f(x),y)] 
    = \E_{x \sim p_x} \min_{q = f(x)} \E_{y \sim p(\cdot|x)} [\ell(q,y)]
\end{equation}
Using the fact that $\ell$ is the cross-entropy loss we get
\begin{equation}
    \E_{y \sim p(\cdot|x)} [\ell(q,y)] = \E_{y \sim p(\cdot|x)} [-\sum_{i=1}^K y_i \ln(q_i)] =   -\sum_{i=1}^K p_i \ln(q_i)
\end{equation}
where we noted $p_i = p(y=i|x)$. Noticing that since $q_i$ is a probability distribution we have $\sum_{i=1}^K q_i=1$, we have,
\begin{equation}
    \E_{y \sim p(\cdot|x)} [\ell(q,y)] =  -\sum_{i=1}^{K-1} p_i \ln(q_i) - p_K \ln(1-\sum_{i=1}^{K-1}q_i)
\end{equation}
we can then differentiate this loss with respect to $q_i \geq 0$ and get,
\begin{equation}
    \frac{\partial\E_{y \sim p(\cdot|x)} [\ell(q,y)]}{\partial q_i}(q) = - \frac{p_i}{q_i} + \frac{p_K}{q_K}
\end{equation}
We can finally notice that $q_i = p_i$ is a feasible solution.
\end{proof}
\end{appendixproof}
Here $p_g$ is defined as in \eqref{eq:def_pg} and implicity depends on $\mathcal{D}$. For a given class $\mathcal F$, the solution to an~\eqref{eq:two_player_game} game can be seen as one which finds a regularized adversarial distribution of maximal $\mathcal{F}$-entropy,
\begin{equation}
    \max_{g\in \mathcal{G}_\epsilon} \min_{f_c \in \mathcal{F}} \varphi_\lambda(f_c,g) = (1+\lambda)\max_{g\in \mathcal{G}_\epsilon} H_{\mathcal{F}}( \tfrac{1}{(1+\lambda)}p_{g} + \tfrac{\lambda}{(1+\lambda)}\gD_\text{ref})] \,,
\end{equation}
where the distribution $\tfrac{1}{(1+\lambda)}p_{g} + \tfrac{\lambda}{(1+\lambda)}\gD_\text{ref}$ is the mixture of the generated distribution $p_g$ and the empirical distribution over the dataset $\gD_\text{ref}$.
This alternative perspective on the game~\eqref{eq:two_player_game} shares similarities with the divergence minimization perspective on GANs~\citep{huang2017parametric}. 
However, while in GANs it represents a divergence between two distributions, in~\eqref{eq:two_player_game} this corresponds to a notion of entropy.

A high-level interpretation of $\mathcal{F}$-entropy maximization is that it implicitly defines a metric for distributions which are challenging to classify with only access to classifiers in $\mathcal{F}$. Overall, the optimal generated distribution $p_g$ can be seen as the most adversarial dataset against the class $\mathcal{F}$.

\xhdr{Properties of the $\mathcal{F}$-entropy} 
\begin{figure}
\centering
\vspace{-1cm}
\begin{subfigure}
    \centering
    \includegraphics[width=.67\linewidth]{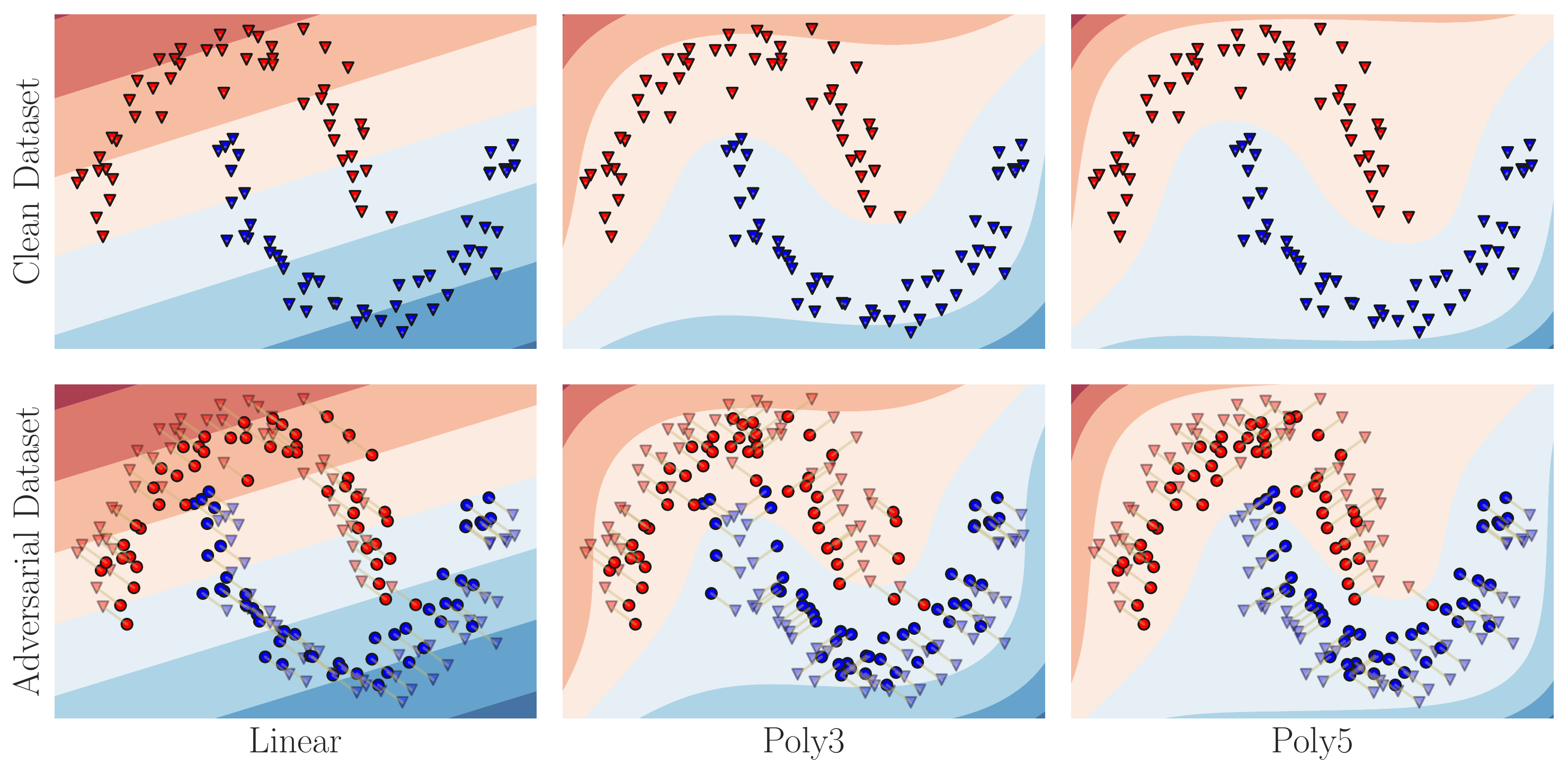}
    \vspace{-2mm}
\end{subfigure}
\hspace{-2mm}
\begin{subfigure}
\centering
\includegraphics[width=.32\linewidth]{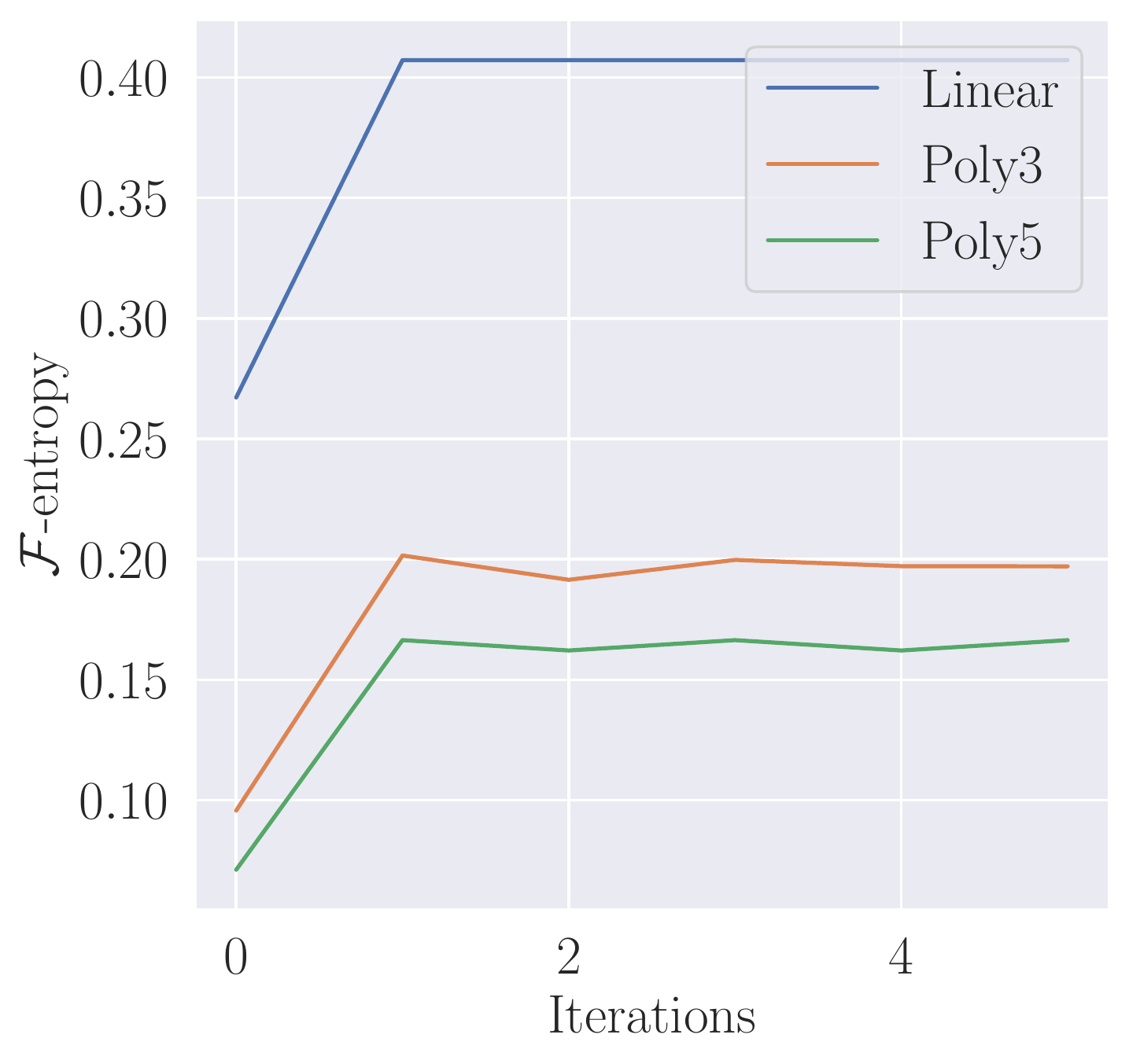}
\end{subfigure}
    \caption{ \small
    Illustration of Proposition~\ref{prop:self_info} for three classes of classifiers in the context of logistic regression for the two moon dataset of scikit-learn~\citep{scikit-learn} with linear and polynomial (of degree 3 and 5) features. \textbf{Left:} Scatter plot of the clean or adversarial dataset and the associated optimal decision boundary. For the adversarial dataset, each corresponding clean example is represented with a {\color{red} $\blacktriangle$}/{\color{blue} $\blacktriangle$} and is connected to its respective adversarial example {\color{red}$\bullet$}/{\color{blue}$\bullet$}. \textbf{Right:} value of the $\mathcal{F}$-entropy for the different classes as a function of the number of iterations.}
    \label{fig:self_info}
\end{figure}
We illustrate the idea that the optimal generator and the $\gF$-entropy depend on the hypothesis class $\gF$ using a simple example. To do so, we perform logistic regression~\eqref{eq:two_player_logistic} with linear and polynomial (of degree 3 and 5) features (respectively called Linear, Poly3, and Poly5) on the two moon dataset of scikit-learn~\citep{scikit-learn}. Note that we have $\text{Linear} \subset \text{Poly3} \subset \text{Poly5}$. For simplicity, we consider a deterministic generator $g(x,y)$ that is realized by computing the maximization step via 2D grid-search on the $\epsilon$ neighborhood of $x$. We train our models by successively fully solving the minimization step and the maximization step in~\eqref{eq:two_player_logistic}. 

We present the results in Figure~\ref{fig:self_info}. One iteration corresponds to the computation of the optimal classifier against the current adversarial distribution $p_g$ (also giving the value of the $\gF$-entropy), followed by the computation of the new optimal adversarial $p'_{g}$ against this new classifier. The left plot illustrates the fact that the way of attacking a dataset depends on the class considered. For instance, when considering linear classifiers, the attack is a uniform translation on all the data-points of the same class. While when considering polynomial features, the optimal adversarial dataset pushes the the corners of the two moons closer together. In the right plot, we can see an illustration of Proposition~\ref{prop:self_info}, where the $\mathcal{F}$-entropy takes on a smaller value for larger classes of classifiers.

\section{Attacking in the Wild: Experiments and Results}

We investigate the application of our AEG framework to produce adversarial examples against
MNIST and CIFAR-10 classifiers. First we investigate our performance in a challenging NoBox setting where we must attack an unseen target model with knowledge of only its hypothesis class (i.e., architecture) and a sample of similar training data (\S\ref{q1}). Following this, we investigate how well AEG attacks transfer across architectures (\S\ref{q2}), as well as AEG's performance attacking robust classifiers (\S\ref{q3}). 

\xhdr{Experimental setup}
We perform all attacks, including baselines, with respect to the $\ell_{\infty}$ norm constraint with $\epsilon=0.3$ for MNIST and $\epsilon=0.03125$ for CIFAR-10. For AEG models, we train both  generator ($g$) and representative classifier ($f_c$) using stochastic gradient descent-ascent with the ExtraAdam optimizer~\citep{gidel2019variational} and held out target models, $f_t$, are trained offline using SGD with Armijo line search \citep{vaswani2019painless}.
Full details of our model architectures, including hyperparameters, employed in our AEG framework can be found in Appendix \S{\ref{app:architecture}}.\footnote{Code: \url{https://github.com/joeybose/Adversarial-Example-Games.git}}

\xhdr{Baselines}
Throughout our experiments we rely on four standard blackbox transfert attack strategies adapted to the NoBox setting: the Momentum-Iterative Attack (MI-Attack) \citep{dong2018boosting}, the Input Diversity  (DI-Attack) \citep{xie2019improving}, the Translation-Invariant (TID-Attack) \citep{dong2019evading} and the Skip Gradient Method (SGM-Attack) \citep{wu2020skip}.
For fair comparison, we inherit all hyperparameter settings from their respective papers. Note that SGM-attack is only defined with architectures that contain skip connections (e.g. ResNets).

\xhdr{AEG Architecture} The high-level architecture of our AEG framework is illustrated in Figure~\ref{fig:AEG_architecture}. The generator takes the input $x$ and encode it into $\psi(x)$, then the generator uses this encoding to compute a probability vector $p(\psi(x))$ in the probability simplex of size $K$, the number of classes. Using this probability vector, the network then samples a categorical variable $z$ according to a multinomial distribution of parameter $p(\psi(x))$. Intuitively, this category may correspond to a target for the attack. The gradient is backprogated across this categorical variable using the gumble-softmax trick \cite{jang2016categorical,maddison2016concrete}. Finally, the decoder takes as input $\psi(x)$, $z$ and the label $y$ to output an adversarial perturbation $\delta$ such that $\|\delta\|\leq \epsilon$. In order to generate adversarial perturbations over images that obey $\epsilon$-ball constraints, we employ a scaled tanh output layer to scale the output of the generator to $(0,1)$, subtract the clean images, and finally apply an elementwise multiplication by $\epsilon$. We then compute $\ell(f(x+\delta),y)$ where $f$ is the critic and $\ell$ the cross entropy loss. Further details can be found in Appendix~\S\ref{app:architecture}.

\begin{figure}
    \centering
    \includegraphics[width= .9 \linewidth]{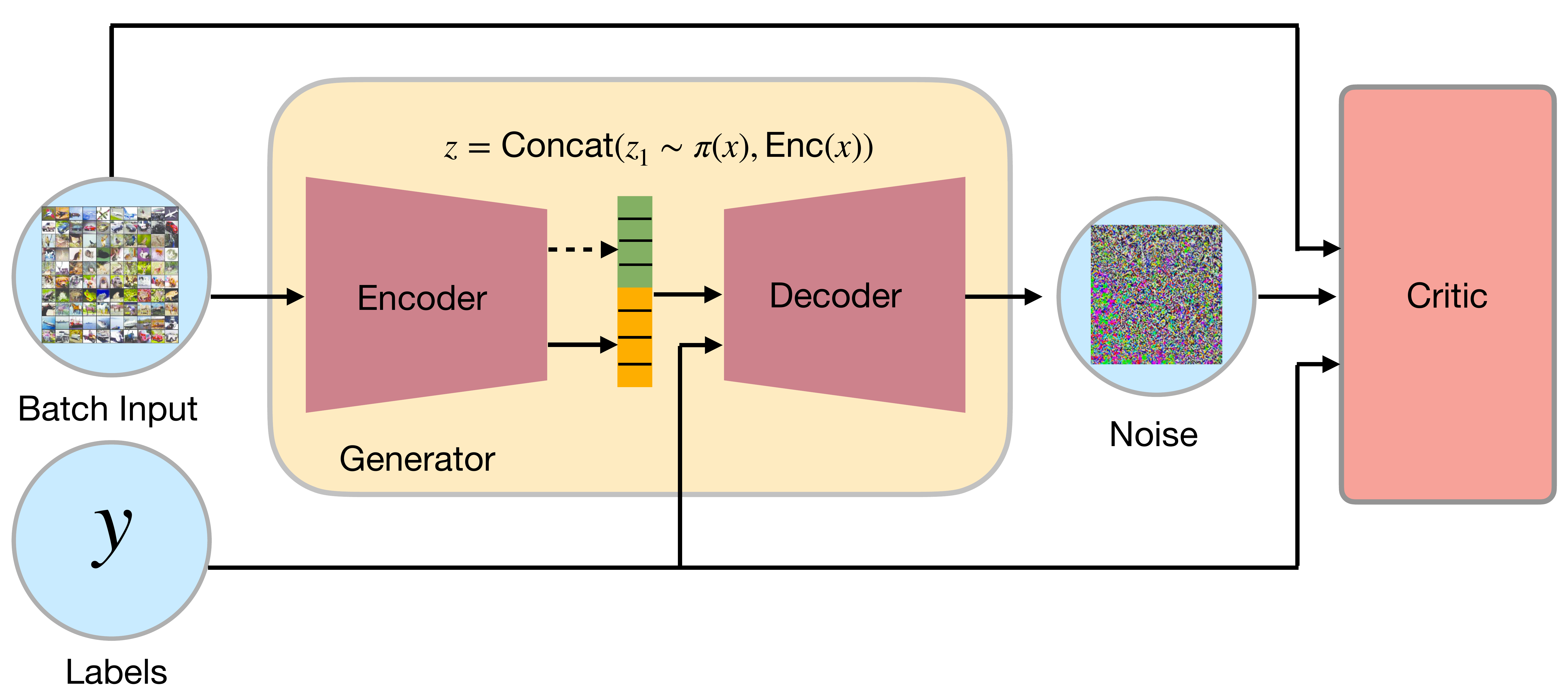}
    \caption{AEG framework architecture}
\label{fig:AEG_architecture}
\end{figure}
\cut{
\begin{wrapfigure}{r}{7.5cm}
\vspace{-4mm}
\centering
\includegraphics[width= \linewidth]{figures/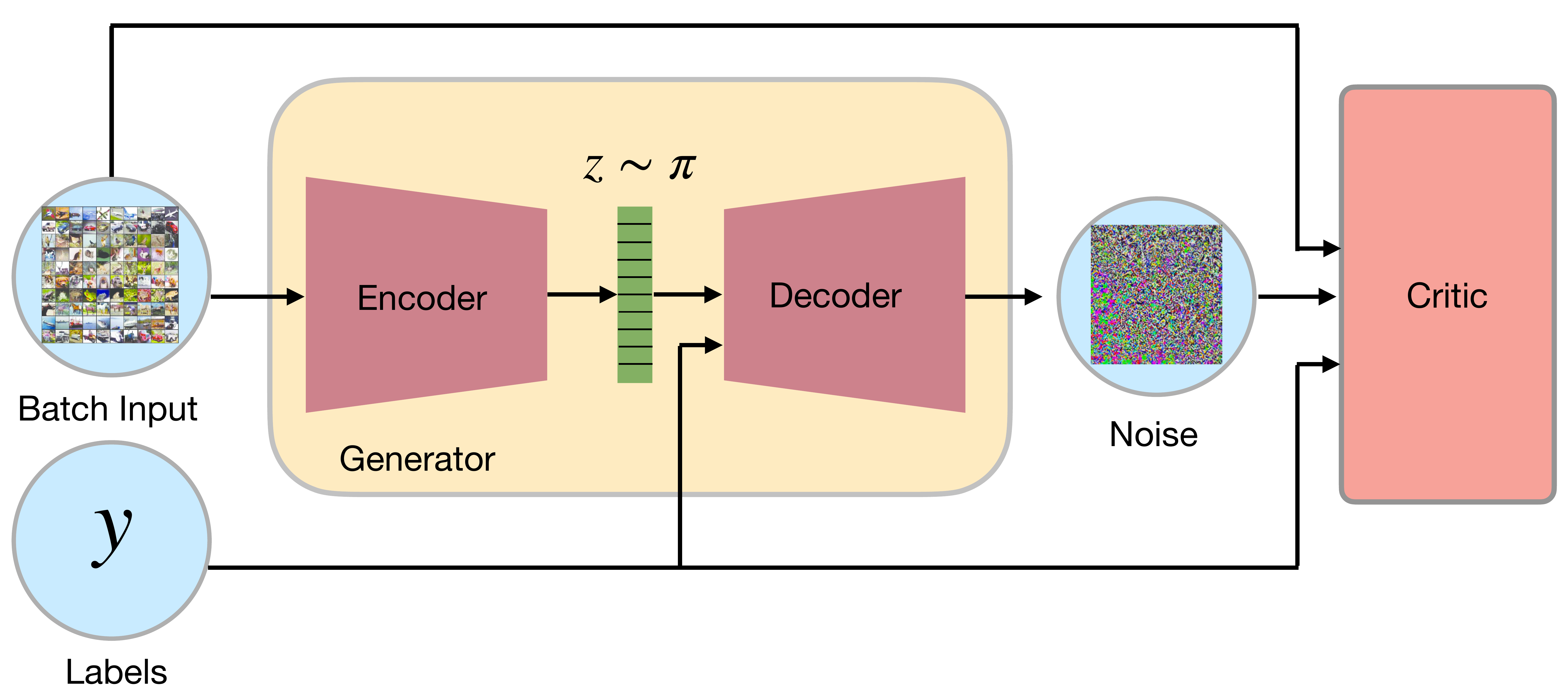}
\caption{AEG Architecture.}
\vspace{-2mm}
\end{wrapfigure}
}

\cut{
We allow the generator to update its parameters several times on the same batch of examples before updating the critic. In particular we update the generator until it is able to fool the critic or it reaches some fixed number of iterations. We set this max number of iterations to 20 in all our experiments. We also find that biasing the critic update various forms of adversarial training consistently leads to the most effective attack. We reconcile this phenomenon by noting that through adversarial training the critic itself becomes a robust model which provides a richer learning signal to the generator. Furthermore, the elegance of the AEG frameworks allows the practitioner to further bias the optimization process of the critic ---and consequently the generator--- through picking and choosing effective robustness techniques such as training with PGD adversarial examples generated at a prior timestep.
}
\subsection{NoBox Attacks on a Known Architecture Class but Unknown Train Set}\label{q1}

We first evaluate the AEG framework in a NoBox setting, where we know only the architecture of the target model and have only access to a sample of similar training data (but not the exact training data of the target model).
To simulate having access to a similar (but not identical dataset) as the target model, for each dataset we create random equally-sized splits of the data (10000 examples per splits). Within each split we use one fold to train the split classifier which acts as the representative classifiers for all attackers who are then evaluated their ability to fool the remaining split classifiers on unseen target examples $\mathcal{D}$.
For the MNIST dataset we consider LeNet classifier \citep{lecun2015lenet}, while for CIFAR-10 we consider ResNet-18 \citep{he2016deep}.
Table \ref{q1_table_new} shows the results of our experiments on this task, averaged across all splits and folds. 
We see that our AEG approach achieves state-of-the-art results, either outperforming or matching (within a 95\% confidence interval) all baselines in both settings. 
Note that this task is significantly more challenging than many prior blackbox attack setups, which assume access to the full training data of the target model.\footnote{We include results on a more permissive settings with access to the full training data in Appendix \ref{appendix:quant_results}}

\begin{table}[ht]
    \centering
 \label{q1_table_new}
    \begin{small}
    \centering
            \begin{tabular}{cccccc}
            \toprule
                   Dataset & MI-Attack & DI-Attack & TID-Attack & SGM-Attack &  AEG (Ours) \\
                    \midrule
                    MNIST  & \cellcolor{palette2} 87.5 $\pm$ 2.7 & \cellcolor{palette1}  \textbf{89.5 $\pm$ 2.5} & \cellcolor{palette3} 85.4 $\pm$ 2.8 $^\dag$& N/A & \cellcolor{palette1} \textbf{89.5 $\pm$ 3.2} \\
                    \midrule
                    CIFAR-10 (Res18) &\cellcolor{palette3}56.8 $\pm$ 1.2  $^\dag$& \cellcolor{palette2}  84.0 $\pm$ 1.5 $^\dag$& \cellcolor{palette4}  9.1 $\pm$ 1.6 $^\dag$& \cellcolor{palette3}  60.5 $\pm$ 1.5 $^\dag$& \cellcolor{palette1} \textbf{87.0 $\pm$ 2.1}\\
            \bottomrule
            \end{tabular}
    \end{small}
     \caption{Attack success rates, averaged across target models with 95\% confidence intervals shown. $^\dag$indicates a statistically significant result as determined by the paired T-test when compared to AEG.}
\end{table}

\subsection{NoBox Attacks Across Distinct Architectures}
\label{q2}
We now consider NoBox attacks where we do not know the architecture of the target model but where the training data is known---a setting previously referred to as blackbox transfer \citep{tramer2017ensemble}. For evaluation, we use CIFAR-10 and train $10$ instances of VGG-16 \citep{simonyan2014very}, ResNet-18 (RN-18) \citep{he2016deep}, Wide ResNet (WR) \citep{zagoruyko2016wide}, DenseNet-121 (DN-121) \citep{huang2017densely} and Inception-V3 architectures (Inc-V3) \citep{szegedy2016rethinking}. 
Here, we optimize the attack approaches against a single pre-trained classifier from a particular architecture and then evaluate their attack success on classifiers from distinct architectures averaged over $5$ instantiations.  
Our findings when using ResNet-18, DenseNet-121 and the VGG-16 as the source architecture are provided in Table 2. Overall we find that AEG beats all other approaches and lead to a new state of the art. In particular AEG outperforms the best baseline in each setting by an average of $29.9\%$ across the different source architectures with individual average gains of $9.4\%$, $36.2\%$, and $44.0\%$ when using a RN-18 model, DN-121, and VGG-16 source models respectively.

\begin{table}[ht]
\begin{small}
    \centering
            \begin{tabular}{ccccccc}
            \toprule
                   Source & Attack &   VGG-16 & RN-18 & WR & DN-121 & Inc-V3 \\
                            \midrule
                            & Clean &  11.2 $\pm$ 1.8 &  13.1 $\pm$ 4.0 &  6.8 $\pm$ 1.4 &  11.2 $\pm$ 2.8 &  9.9 $\pm$ 2.6 \\
                            \midrule
\multirow{5}{*}{RN-18}      & MI-Attack & \cellcolor{palette3}  63.9 $\pm$ 2.6 & \cellcolor{palette3}  74.6 $\pm$ 0.8 & \cellcolor{palette3}  63.1 $\pm$ 2.4 & \cellcolor{palette3}  72.5 $\pm$ 2.6 & \cellcolor{palette3}  67.9 $\pm$ 3.2 \\
                            & DI-Attack & \cellcolor{palette2}  77.4 $\pm$ 3.4 & \cellcolor{palette2}  90.2 $\pm$ 1.6 & \cellcolor{palette2}  74.0 $\pm$ 2.0 & \cellcolor{palette2}  87.1 $\pm$ 2.6 &\cellcolor{palette1} \textbf{85.8 $\pm$ 1.6} \\
                            & TID-Attack & \cellcolor{palette4}  21.6 $\pm$ 2.6 & \cellcolor{palette4}  26.5 $\pm$ 4.8 & \cellcolor{palette4}  14.0 $\pm$ 3.0 & \cellcolor{palette4}  22.3 $\pm$ 3.2 & \cellcolor{palette4}  19.8 $\pm$ 1.8 \\
                            & SGM-Attack & \cellcolor{palette3}  68.4 $\pm$ 3.6 & \cellcolor{palette3}  79.5 $\pm$ 1.0 & \cellcolor{palette3}  64.3 $\pm$ 3.2 & \cellcolor{palette3}  73.8 $\pm$ 2.0& \cellcolor{palette3}  70.6 $\pm$ 3.4 \\
                            & AEG (Ours) &\cellcolor{palette1} \textbf{93.8 $\pm$ 0.7} &\cellcolor{palette1} \textbf{97.1 $\pm$ 0.4}  &\cellcolor{palette1} \textbf{80.2 $\pm$ 2.2} &\cellcolor{palette1} \textbf{93.1 $\pm$ 1.3} & \cellcolor{palette1} \textbf{88.4 $\pm$ 1.6}\\
                            \midrule
\multirow{5}{*}{DN-121}     & MI-Attack & \cellcolor{palette3}  54.3 $\pm$ 2.2 & \cellcolor{palette3}  62.5 $\pm$ 1.8 & \cellcolor{palette3}  56.3 $\pm$ 2.6 & \cellcolor{palette3}  66.1 $\pm$ 3.0 & \cellcolor{palette3}  65.0 $\pm$ 2.6 \\
                            & DI-Attack & \cellcolor{palette3}  61.1 $\pm$ 3.8 & \cellcolor{palette3}  69.1 $\pm$ 1.6 & \cellcolor{palette3}  61.9 $\pm$ 2.2 & \cellcolor{palette2}  77.1 $\pm$ 2.4 & \cellcolor{palette3}  71.6 $\pm$ 3.2 \\
                            & TID-Attack & \cellcolor{palette4} 21.7 $\pm$ 2.4 & \cellcolor{palette4}  23.8 $\pm$ 3.0 & \cellcolor{palette4}  14.0 $\pm$ 2.8 & \cellcolor{palette4}  21.7 $\pm$ 2.2 & \cellcolor{palette4}  19.3 $\pm$ 2.4 \\
                            & SGM-Attack & \cellcolor{palette3} 51.6 $\pm$ 1.4 & \cellcolor{palette3}  60.2 $\pm$ 2.6 & \cellcolor{palette3}  52.6 $\pm$ 1.8 & \cellcolor{palette3}  64.7 $\pm$ 3.2 & \cellcolor{palette3} 61.4 $\pm$ 2.6  \\
                            & AEG (Ours) &\cellcolor{palette1} \textbf{93.7 $\pm$ 1.0} &\cellcolor{palette1} \textbf{97.3 $\pm$ 0.6}  &\cellcolor{palette1} \textbf{81.8 $\pm$ 3.0} &\cellcolor{palette1} \textbf{96.7 $\pm$ 0.8} &\cellcolor{palette1} \textbf{92.7 $\pm$ 1.6} \\
                           \midrule
\multirow{5}{*}{VGG-16}     & MI-Attack & \cellcolor{palette3}  49.9 $\pm$  0.2 & \cellcolor{palette3}  50.0 $\pm$ 0.4 & \cellcolor{palette3}  46.7 $\pm$ 0.8 & \cellcolor{palette3} 50.4 $\pm$  1.2 & \cellcolor{palette3}  50.0 $\pm$ 0.6 \\
                            & DI-Attack & \cellcolor{palette3}  65.1 $\pm$ 0.2 & \cellcolor{palette3}  64.5 $\pm$ 0.4 & \cellcolor{palette3} 58.8$\pm$ 1.2  & \cellcolor{palette3}  64.1 $\pm$ 0.6 & \cellcolor{palette3}  60.9 $\pm$ 1.2\\
                            & TID-Attack & \cellcolor{palette4}  26.2 $\pm$ 1.2 & \cellcolor{palette4}  24.0 $\pm$ 1.2& \cellcolor{palette4}  13.0 $\pm$ 0.4 & \cellcolor{palette4}  20.8 $\pm$ 1.4 & \cellcolor{palette4}  18.8 $\pm$ 0.4 \\
                            & AEG (Ours) &\cellcolor{palette1} \textbf{97.5 $\pm$ 0.4} &\cellcolor{palette1} \textbf{96.1 $\pm$ 0.5} &\cellcolor{palette1} \textbf{85.2 $\pm$ 2.2} &\cellcolor{palette1} \textbf{94.1 $\pm$ 1.2}  &\cellcolor{palette1} \textbf{89.5 $\pm$ 1.3} \\
            \bottomrule
            \end{tabular}
            \caption{\small Error rates on $\mathcal{D}$ for average NoBox architecture transfer attacks with $\epsilon=0.03125$. The $\pm$ correspond to 2 standard deviations ($95.5\%$ confidence interval for normal distributions).\label{table:q2}}
    \end{small}
\end{table}

\subsection{NoBox Attacks Against Robust Classifiers}
\label{q3}
We now test the ability of our AEG framework to attack target models that have been robustified using adversarial and ensemble adversarial training \citep{madry2017towards,tramer2017ensemble}. 
For evaluation against PGD adversarial training, we use the public models as part of the MNIST and CIFAR-10 adversarial examples challenge.\footnote{\url{https://github.com/MadryLab/[x]_challenge}, for \texttt{[x]} in $\{\texttt{cifar10},\texttt{mnist}\}$.
Note that our threat model is more challenging than these challenges as we use non-robust source models.
} For ensemble adversarial training, we follow the approach of \citet{tramer2017ensemble} (see Appendix \ref{appendix:ens_adv_training}). 
We report our results in Table 3 and average the result of stochastic attacks over $5$ runs. 
We find that AEG achieves state-of-the-art performance in all settings, proving an average improvement in success rates of $54.1\%$ across all robustified MNIST models and $40.3\%$  on robustifi ed CIFAR-10 models.

\begin{table}[ht]
    \begin{small}
    \centering
            \begin{tabular}{cccccccc}
            \toprule
                   Dataset & Defence &  Clean & MI-Att$^\dag$ & DI-Att & TID-Att & SGM-Att $^\dag$  & AEG (Ours) \\
                            \midrule
\multirow{5}{*}{MNIST}      &  $\textrm{A}_\text{ens4}$ & $0.8$ & \cellcolor{palette3} $43.4$ & \cellcolor{palette3} $42.7$ & \cellcolor{palette4} $16.0 $ & N/A  &\cellcolor{palette1} \textbf{65.0}\\
                            & $\textrm{B}_\text{ens4}$ &  $0.7$ & \cellcolor{palette3} $20.7$ & \cellcolor{palette3} $22.8$ & \cellcolor{palette4} $8.5 $ & N/A &\cellcolor{palette1} \textbf{50.0} \\
                            & $\textrm{C}_\text{ens4}$ &  $0.8$ & \cellcolor{palette3} $73.8$ & \cellcolor{palette3} $30.0$ & \cellcolor{palette4} $9.5 $ & N/A &\cellcolor{palette1} \textbf{80.0}\\
                            & $\textrm{D}_\text{ens4}$ &  $1.8$ & \cellcolor{palette2} $84.4$ & \cellcolor{palette3} $76.0 $ & \cellcolor{palette2} $81.3 $ & N/A &\cellcolor{palette1} \textbf{86.7}\\
                            & Madry-Adv &  $0.8$ & \cellcolor{palette4} $2.0$ & \cellcolor{palette3} $3.1 $ & \cellcolor{palette4} $2.5$ & N/A &\cellcolor{palette1} \textbf{5.9}\\
                            \midrule
\multirow{5}{*}{CIFAR-10}   & $\textrm{RN-18}_\text{ens3}$ & $16.8$ & \cellcolor{palette4} $17.6$ & \cellcolor{palette4} $21.6$& \cellcolor{palette3} $33.1$ & \cellcolor{palette4} $19.9$ &\cellcolor{palette1} \textbf{52.2}\\
                            & $\textrm{WR}_\text{ens3}$ &  $12.8$ & \cellcolor{palette4} $18.4$ & \cellcolor{palette4} $ 20.6$ & \cellcolor{palette3} $28.8 $ & \cellcolor{palette4} $18.0$ &\cellcolor{palette1} \textbf{49.9} \\
                            & $\textrm{DN-121}_\text{ens3}$ &  $21.5$ & \cellcolor{palette4} $20.3$ & \cellcolor{palette3} $ 22.7$ & \cellcolor{palette3} $31.3$ & \cellcolor{palette3} $21.9$ &\cellcolor{palette1} \textbf{41.4} \\
                            & $\textrm{Inc-V3}_\text{ens3}$ &  $14.8$ & \cellcolor{palette4} $19.5$ & \cellcolor{palette2} $42.2$* & \cellcolor{palette3} $30.2*$ & \cellcolor{palette3} $35.5$* &\cellcolor{palette1} \textbf{47.5} \\
                            & Madry-Adv &  $12.9$ & \cellcolor{palette3} $17.2$  & \cellcolor{palette3} $16.6$ & \cellcolor{palette3} $16.6$ & \cellcolor{palette3} $16.0$ &\cellcolor{palette1} \textbf{21.6}\\
            \bottomrule
            \end{tabular}
            \caption{ \small\label{table:q3}
            Error rates on $\mathcal{D}$ for NoBox known architecture attacks against Adversarial Training and Ensemble Adversarial Training. ${}^*$ Attacks were done using WR. ${}^\dag$ Deterministic attack.}
    \end{small}
\end{table}

\section{Related Work}
In addition to non-interactive blackbox adversaries we compare against, there exists multiple hybrid approaches that combine crafting attacks on surrogate models which then serve as a good initialization point for queries to the target model \cite{papernot2016transferability, shi2019curls, huang2019black}. Other notable approaches to craft blackbox transfer attacks learning ghost networks \cite{li2018learning}, transforming whitebox gradients with small ResNets \cite{li2019regional}, and transferability properties of linear classifiers and 2-layer ReLu Networks \cite{charles2019geometric}. There is also a burgeoning literature of using parametric models to craft adversarial attacks such as the Adversarial Transformation Networks framework and its variants \cite{baluja2017adversarial,xiao2018generating}. Similar in spirit to our approach many attacks strategies benefit from employing a latent space to craft attacks \cite{zhao2017generating,tu2019autozoom,bose2019generalizable}. However, unlike our work, these strategies cannot be used to attack entire hypothesis classes.

Adversarial prediction games between a learner and a data generator have also been studied in the literature \citep{bruckner2012static}, and in certain situations correspond to a Stackelberg game \citet{bruckner2011stackelberg}. While similar in spirit, our theoretical framework is tailored towards crafting adversarial attacks against a fixed held out target model in the novel NoBox threat model and is a fundamentally different attack paradigm. Finally, \citet{erraqabi2018a3t}~ also investigate an adversarial game framework as a means for building robust representations in which an additional discriminator is trained to discriminate adversarial example from natural ones, based on the representation of the current classifier.


\section{Conclusion}
In this paper, we introduce the Adversarial Example Games (AEG) framework which provides a principled foundation for crafting adversarial attacks in the NoBox threat model. Our work sheds light on the existence of adversarial examples as a natural consequence of restricted entropy maximization under a hypothesis class and leads to an actionable strategy for attacking all functions taken from this class. 
Empirically, we observe that our approach leads to state-of-the-art results when generating attacks on MNIST and CIFAR-10 in a number of challenging NoBox attack settings. 
Our framework and results point to a promising new direction for theoretically-motivated adversarial frameworks. 
However, one major challenge is scaling up the AEG framework to larger datasets (e.g., ImageNet), which would involve addressing some of the inherent challenges of saddle point optimization ~\citep{Berard2020A}. 
Investigating the utility of the AEG framework for training robustified models is another natural direction for future work. 
\section*{Broader Impact}
\cut{
Adversarial attacks, especially ones under more realistic threat models, pose several important security, ethical, and privacy risks. In this work, we introduce the NoBox attack setting, which generalizes many other blackbox transfer settings, and we provide a novel framework to ground and study attacks theoretically and their transferability to other functions within a class of functions. As the NoBox threat model represents a more realistic setting for adversarial attacks, our research has the potential to be used against a class of machine learning models in the wild. 
In particular, in terms of risk,  malicious actors could use approaches based on our framework to generate attack vectors that compromise production ML systems or potentially bias them toward specific outcomes that may adversely affect certain population groups, such as minority groups. As a concrete example, one can consider creating transferrable examples in the physical world, such as the computer vision systems of autonomous cars. While prior works have shown the possibility of such adversarial examples ---i.e., adversarial traffic signs, we note that there is a significant gap in translating synthetic adversarial examples to adversarial examples that reside in the physical world~\citep{kurakin2016adversarial}. Understanding and analyzing the NoBox transferability of adversarial examples to the physical world---in order to provide public and academic visibility on these risks---is an essential direction for future research.

Based on the known risks of designing new kinds of adversarial attacks---discussed above---we now outline the ways in which our research is informed by the intent to mitigate these potential societal risks. For instance, our research demonstrates that one can successfully craft adversarial attacks even in the challenging NoBox setting. It raises many important considerations when developing robustness approaches. A straightforward extension is to consider our adversarial example game (AEG) framework as a tool for training robust models. On the theoretical side, exploring formal verification of neural networks against NoBox adversaries is an exciting direction for continued exploration. As an application, ML practitioners in the industry may choose to employ new forms of A/B testing with different types of adversarial examples, of which AEG is one method to robustify and stress test production systems further. Such an application falls in line with other general approaches to red teaming AI systems~ \cite{brundage2020toward} and verifiability in AI development. In essence, the goal of such approaches, including adversarial examples for robustness, is to align AI systems' failure modes to those found in human decision making. Another natural direction is the auditing of production ML systems by third parties, such as government or enforcement agencies.

Our work also has the potential to provide positive benefits in the realm of privacy, especially when concerned with the digital privacy of images. To combat this,~\citet{bose2018adversarial} showed typical face detectors, and by extension, facial recognition systems are vulnerable to adversarial attacks. Motivated by this use case, adversarial examples, including those presented in this work, can serve as a privacy-preserving mechanism by adding small amounts of crafted noise to safeguard against the unwanted loss of privacy to automatic facial recognition. 

Finally---as mentioned above---some studies show that the use of automatic facial recognition has a disproportionately negative impact on ethnic minorities, such as persons of color~\citep{raji2020saving}. Understanding the origin of model failures on adversarial examples pushes us towards the comprehension of such failure modes. Thus, it may also help explain why models disproportionately fail on people from ethnic minorities.
}

Adversarial attacks, especially ones under more realistic threat models, pose several important security, ethical, and privacy risks. In this work, we introduce the NoBox attack setting, which generalizes many other blackbox transfer settings, and we provide a novel framework to ground and study attacks theoretically and their transferability to other functions within a class of functions. As the NoBox threat model represents a more realistic setting for adversarial attacks, our research has the potential to be used against a class of machine learning models in the wild. In particular, in terms of risk, malicious actors could use approaches based on our framework to generate attack vectors that compromise production ML systems or potentially bias them toward specific outcomes.

As a concrete example, one can consider creating transferrable examples in the physical world, such as the computer vision systems of autonomous cars. While prior works have shown the possibility of such adversarial examples —i.e., adversarial traffic signs, we note that there is a significant gap in translating synthetic adversarial examples to adversarial examples that reside in the physical world [45]. Understanding and analyzing the NoBox transferability of adversarial examples to the physical world—in order to provide public and academic visibility on these risks—is an important direction for future research. 
Based on the known risks of designing new kinds of adversarial attacks—discussed above—we now outline the ways in which our research is informed by the intent to mitigate these potential societal risks. For instance, our research demonstrates that one can successfully craft adversarial attacks even in the challenging NoBox setting. It raises many important considerations when developing robustness approaches. A straightforward extension is to consider our adversarial example game (AEG) framework as a tool for training robust models. On the theoretical side, exploring formal verification of neural networks against NoBox adversaries is an exciting direction for continued exploration. As an application, ML practitioners in the industry may choose to employ new forms of A/B testing with different types of adversarial examples, of which AEG is one method to robustify and stress test production systems further. Such an application falls in line with other general approaches to red teaming AI systems [10] and verifiability in AI development. In essence, the goal of such approaches, including adversarial examples for robustness, is to align AI systems’ failure modes to those found in human decision making.

    \section*{Acknowledgments and Disclosure of Funding}

The authors would like to acknowledge Olivier Mastropietro, Chongli Qin and David Balduzzi for helpful discussions as well as Sebastian Lachapelle, Pouya Bashivan, Yanshuai Cao, Gavin Ding, Ioannis Mitliagkas, Nadeem Ward, and Damien Scieur for reviewing early drafts of this work.

\textbf{Funding.} This work is partially supported by the Canada CIFAR AI Chair Program (held at Mila), NSERC Discovery Grant RGPIN-2019-05123 (held by Will Hamilton at McGill), NSERC Discovery Grant RGPIN-2017-06936, an IVADO Fundamental Research Project grant PRF-2019-3583139727, and a Google Focused Research award (both held at U. Montreal by Simon Lacoste-Julien).
Joey Bose was also supported by an IVADO PhD fellowship, Gauthier Gidel by a Borealis AI fellowship and by the Canada Excellence Research Chair in "Data Science for Real-Time Decision-making" (held at Polytechnique by Andrea Lodi), and Andre Cianflone by a NSERC scholarship and a Borealis AI fellowship. 
Simon Lacoste-Julien and Pascal Vincent are
CIFAR Associate Fellows in the Learning in Machines \& Brains program. Finally, we thank Facebook for access to computational resources.

\textbf{Competing interests.} Joey Bose was formerly at FaceShield.ai which was acquired in 2020. W.L. Hamilton was formerly a Visiting Researcher at Facebook AI Research. Simon Lacoste-Julien additionally works part time as the head of the SAIT AI Lab, Montreal from Samsung.
\bibliography{bibliography}
\bibliographystyle{abbrvnat} 
\nosectionappendix
\appendix
\begin{toappendix}
\label{appendix}

\section{Experimental Details}
The experiments are subject to different sources of variations, in all our experiments we try to take into account those sources of variations when reporting the results. We detail the different sources of variations for each experiment and how we report them in the next section. 
\subsection {Source of variations}

\paragraph{NoBox attacks on a known architecture class} we created a random split of the MNIST and CIFAR10 dataset and trained a classifier on each splits. To evaluate each method we then use one of the classifiers as the source classifier and all the other classifiers as the targets we want to attack. We then compute the mean and standard deviation of the attack success rates across all target classifiers. To take into account the variability in the results that comes from using a specific classifier as the source model, we also repeat the evaluation by changing the source model. We report the average and $95\%$ interval (assuming the results follow a normal distribution) in Table~\ref{q1_table_new} by doing macro-averaging overall evaluations.

\paragraph{NoBox attacks across distinct architectures} For each architecture, we trained 10 different models. When evaluated against a specific architecture we evaluate against all models of this architecture. In Table 2, we report the mean and standard deviation of the error rates across all models.

\paragraph{NoBox Attacks against robust classifiers} For this experiment, we could only train a single robust target model per architecture because of our computational budget. The only source of variations is thus due to the inherent stochasticity of each method. Evaluating this source of randomness would require to run each method several times, unfortunately, this is quite expensive and our computational budget didn't allow for it. In Table 3, we thus only report a single number per architecture.

\section{Additional results}
\subsection{Quantitative Results}
\label{appendix:quant_results}
We now provide additional results in the form of whitebox and blackbox query attacks adapted to the NoBox evaluation protocol for Known-Architecture attacks which is the experimental setting in \textbf{Q1}. For whitebox attacks we evaluate APGD-CE and APGD-DLR \cite{croce2020reliable} which are improvements over the powerful PGD attack \cite{madry2017towards}. When ensembled with another powerful perturbation minimizing whitebox attack FAB \cite{croce2019minimally} and the query efficient blackbox Square attacks \cite{andriushchenko2019square} yields the current SOTA attack strategy called AutoAttack \cite{croce2020reliable} \cite{croce2020reliable}. Additionally, we compare with two parametric blackbox query approaches that both utilize a latent space in AutoZoom \cite{tu2019autozoom} and $\mathcal{N}$Attack \cite{li2019nattack}. To test transferability of whitebox and blackbox query attacks in the NoBox known architecture setting we give generous iteration and query budgets (10x the reported settings in the original papers) when attacking the source models, but only a single query for each target model. It is interesting to note that APGD variant whitebox attacks are significantly more effective than query based blackbox attacks but lack the same effectiveness of NoBox baselines. We hypothesize that the transferability of whitebox attacks may be due to the fact that different functions learn similar decision boundaries but different enough such that minimum distortion whitebox attacks such as FAB are ineffective.

\begin{table}[ht]
    \label{q1_table}
    \centering
          
            \begin{tabular}{cccc}
            \toprule
                   &    &   MNIST  & {CIFAR-10}\\
                    \midrule
\multirow{4}{*}{Whitebox} & AutoAttack* & $84.4 \pm 5.1$  & $91.0 \pm 1.9$ \\
                   & APGD-CE & $95.8 \pm 1.9 $ &  $97.5 \pm 0.7$ \\
                   & APGD-DLR & $83.9 \pm 5.4 $  & $90.7 \pm 2.1$ \\
                   & FAB & $5.4 \pm 2.2$  & $10.4 \pm 1.7$ \\
                   \midrule 
\multirow{3}{*}{Blackbox-query} & Square & $60.9 \pm 10.3$ & $21.9 \pm 2.8$ \\
                   &$\mathcal{N}$-Attack & $9.5 \pm 3.2 $  & $56.7 \pm 8.9$ \\
                   \midrule 
\multirow{4}{*}{\shortstack{Non-Interactive \\ Blackbox}}& MI-Attack & $93.7 \pm 1.1 $  & \textbf{99.9} $\pm 0.1$ \\
                    & DI-Attack & $95.9 \pm 1.6 $  & \textbf{99.9 $\pm$ 0.1}\\
                    & TID-Attack & $92.8 \pm 2.7$& $19.7 \pm 1.5$ \\
                    & SGM-Attack & N/A  & \textbf{99.8} $ \pm 0.3$ \\
                    & AEG (Ours) & \textbf{98.9 $\pm$ 1.4}  & $98.5 \pm 0.6$ \\
            \bottomrule
            \end{tabular}
            \caption{Test error rates for average blackbox transfer over architectures at $\epsilon=0.3$ for MNIST and $\epsilon=0.03125$ for CIFAR-10 (higher is better)}
    \end{table}

    \begin{table}[ht]
    \vspace{-1mm}
\begin{small}
    \centering
            \begin{tabular}{ccccccc}
            \toprule
                   Source & Attack &   VGG-16 & RN-18 & WR & DN-121 & Inc-V3 \\
                            \midrule
                            & Clean & $11.2 \pm 0.9$ & $13.1 \pm 2.0$ & $6.8 \pm 0.7$ & $11.2 \pm 1.4$ & $9.9 \pm 1.3$ \\
                            \midrule
\multirow{5}{*}{WR}      & MI-Attack & $67.8 \pm 3.01$ & $86.0 \pm 1.7$ & \textbf{99.9 $\pm$ 0.1} & $89.0 \pm 2.6$ & $88.2 \pm 1.4$\\
                            & DI-Attack & $68.3 \pm 2.4$ & 88.5 $\pm$ 2.1 & \textbf{99.9 $\pm$ 0.1} & 91.2 $\pm$ 1.6 & \textbf{91.5 $\pm$ 1.8}\\
                            & TID-Attack & $23.1 \pm 1.8 $ & $25.9 \pm 1.3$ & $20.6 \pm 1.0$ & $23.6 \pm 1.2$ & $21.9 \pm 1.7$ \\
                            & SGM-Attack & $69.1 \pm 2.1$ & 88.6 $\pm$ 2.0 & \textbf{99.6 $\pm$ 0.4} & 90.7 $\pm$ 1.9 & $86.8 \pm 2.2$ \\
                            & AEG (Ours) & $40.8\pm 3.22$ & $70.6 \pm 4.9$ & $98.5 \pm 0.6$ & \textbf{88.2 $\pm$ 4.6} & \textbf{89.6 $\pm$ 1.8} \\
                            & AEG (New) & \textbf{86.2 $\pm$ 1.8} & \textbf{94.1 $\pm$ 1.5} & 81.1 $\pm$ 1.1 & \textbf{93.1 $\pm$ 1.7} & 89.2 $\pm$ 1.8 \\
            \bottomrule
            \end{tabular}
            \caption{Error rates on $\mathcal{D}$ for average NoBox architecture transfer attacks with $\epsilon=0.03125$ with Wide-ResNet architecture}
    \end{small}
    \vspace{-2mm}
\end{table}
\subsection{Qualitative Results}
\label{appendix:qual_results}
As a sanity check we also provide some qualitative results about the generated adversarial attacks. In Figure~\ref{fig:mnist_adv} we show the 256 attacked samples generated by our method on MNIST. In Figure~\ref{fig:cifar_adv} we show on the left the 256 CIFAR samples to attack, and on the right the perturbations generated by our method amplified by a factor 10.

\begin{figure}[h]
    \centering
    \includegraphics[width=0.7\linewidth]{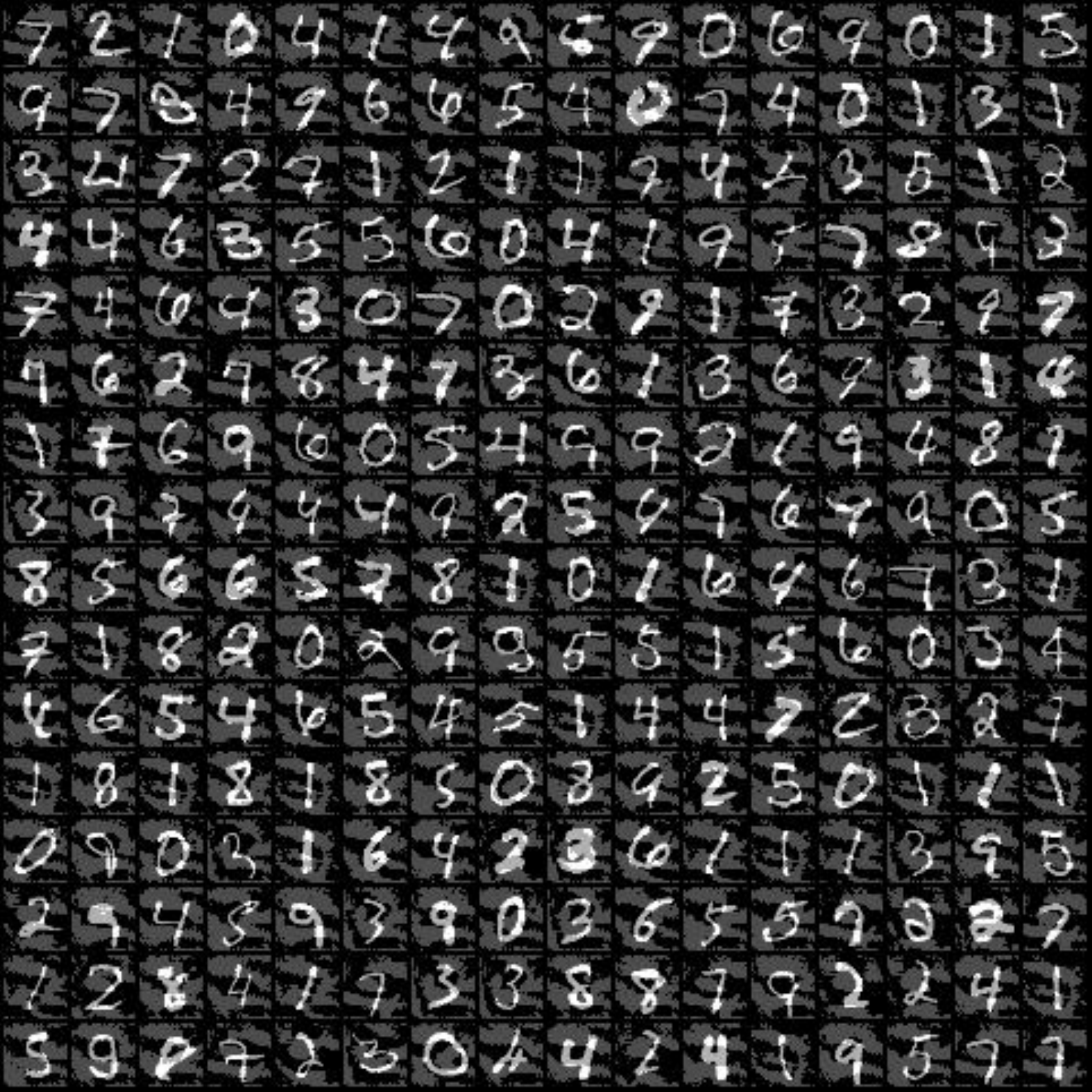}
    \caption{Attacks generated on MNIST by our method.}
    \label{fig:mnist_adv}
\end{figure}

\begin{figure}[h]
\centering
\begin{subfigure}
\centering
\includegraphics[width=.49\linewidth]{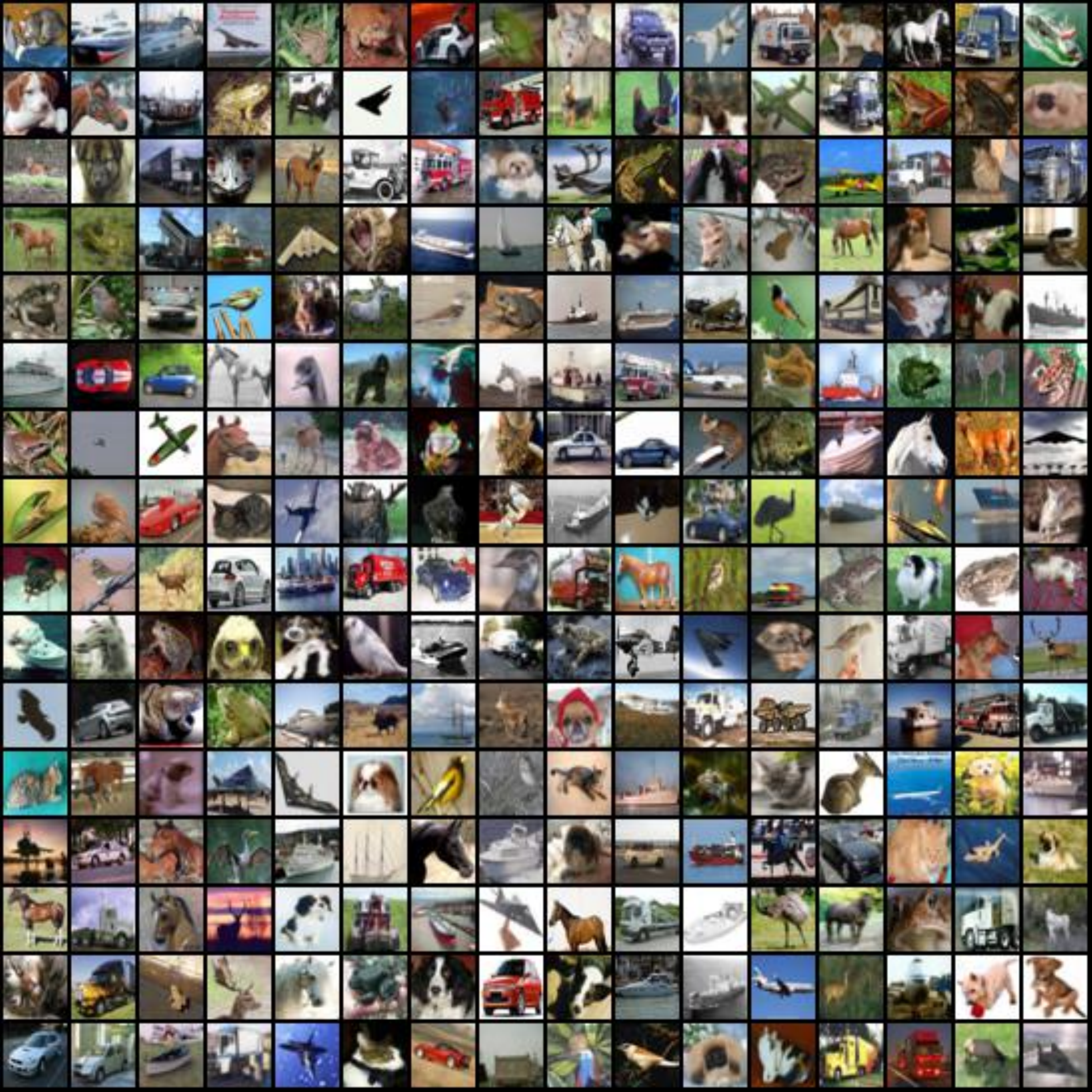}
\end{subfigure}
\begin{subfigure}
\centering
\includegraphics[width=.49\linewidth]{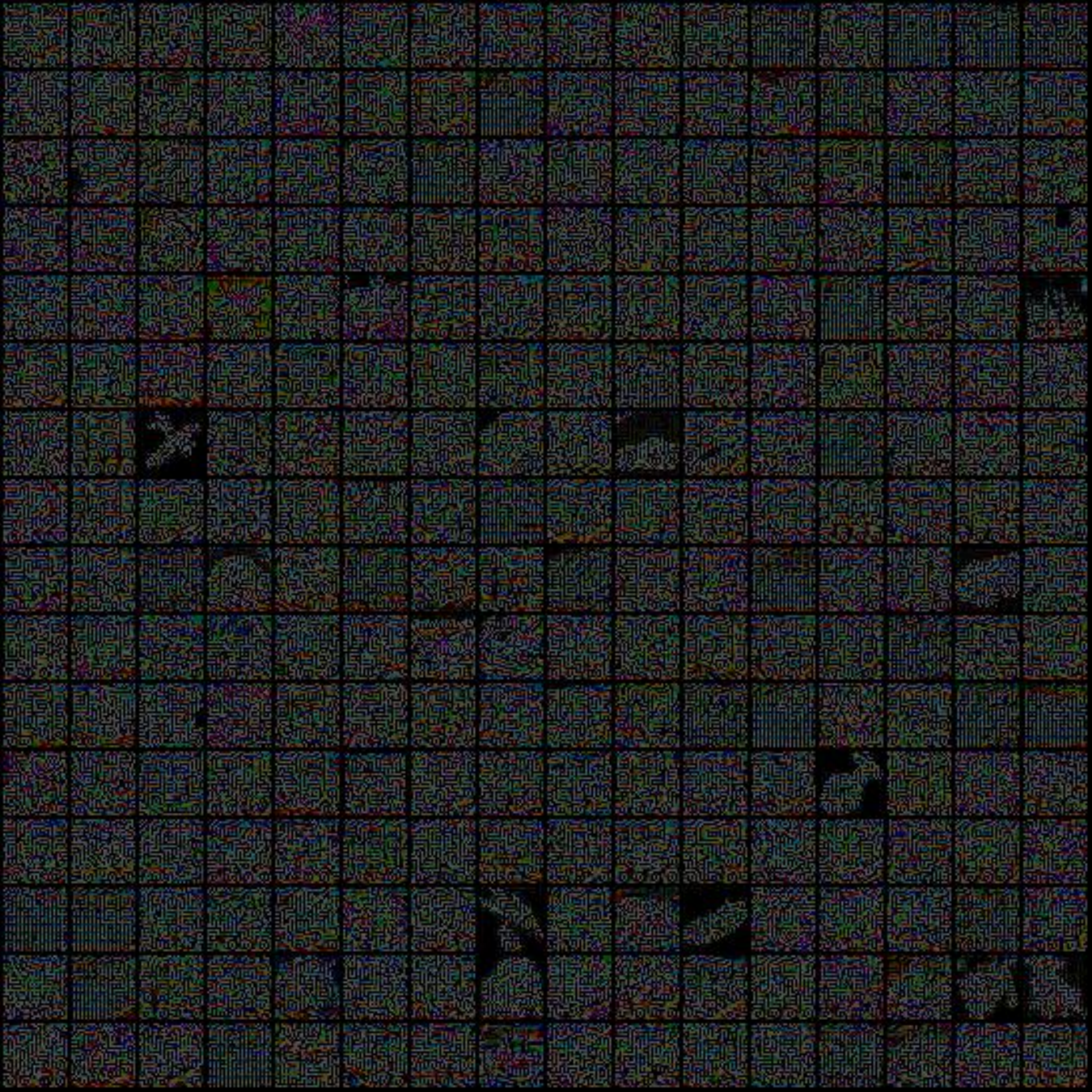}
\end{subfigure}
\caption{\textbf{Left:} CIFAR examples to attack. \textbf{Right:} Pertubations generated by our method amplified by a factor 10. An interesting observation is that the generator learns not to attack the pixel where the background is white.}
\label{fig:cifar_adv}
\end{figure}

\section{Implementation Details}
\label{app:architecture}

\cut{
\subsection{AEG Architecture}

\begin{figure}
    \centering
    \includegraphics[width= .9 \linewidth]{figures/AEG_arch.pdf}
    \caption{AEG framework architecture}
\label{fig:AEG_architecture}
\end{figure}
\cut{
\begin{wrapfigure}{r}{7.5cm}
\vspace{-4mm}
\centering
\includegraphics[width= \linewidth]{figures/AEG_arch.pdf}
\caption{AEG Architecture.}
\vspace{-2mm}
\end{wrapfigure}
}
}
We now provide additional details on training the representative classifiers and generators used in the AEG framework as outlined in \ref{fig:AEG_architecture}. We solve the game using the ExtraAdam optimizer~\citep{gidel2019variational} with a learning rate of $10^{-3}$. We allow the generator to update its parameters several times on the same batch of examples before updating the critic. In particular we update the generator until it is able to fool the critic or it reaches some fixed number of iterations. We set this max number of iterations to 20 in all our experiments. We also find that biasing the critic update various forms of adversarial training consistently leads to the most effective attack. We reconcile this phenomenon by noting that through adversarial training the critic itself becomes a robust model which provides a richer learning signal to the generator. Furthermore, the elegance of the AEG frameworks allows the practitioner to further bias the optimization process of the critic ---and consequently the generator--- through picking and choosing effective robustness techniques such as training with PGD adversarial examples generated at a prior timestep.

\subsection{Generator Architecture}

The architecture we used for the encoder and the decoder is described in Table~\ref{tab:resnet_arch} and~\ref{tab:conv_arch}. For MNIST we used a standard convolutional architecture and for CIFAR-10 we used a ResNet architecture. 
\begin{table}
    \centering
    \begin{tabular}{@{}c@{}}\toprule
\textbf{$d$-ResBlock}\\\toprule
	\multicolumn{1}{l}{\textit{Input}:$x$}\\
	\multicolumn{1}{l}{\textit{Forward for computing }$F(x)$:}\\
	Reflection pad (1) \\  
	conv. (ker: $3{\times}3$, $d \rightarrow d $; stride: $1$; pad: $1$) \\
    Batch Normalization \\ 
	ReLU \\
	Reflection pad (1) \\  
	conv. (ker: $3{\times}3$, $d \rightarrow d $; stride: $1$; pad: $1$) \\
	Batch Normalization \\
	\multicolumn{1}{l}{\textit{Output}:$x+F(x)$} \\
\bottomrule
\end{tabular}
    \caption{ResNet blocks used for the ResNet architectures (see Table~\ref{tab:resnet_arch}) for the Generator. Each ResNet block contains skip connection (bypass), and a sequence of convolutional layers, normalization, and the ReLU non--linearity.}
\label{tab:resblock}
\end{table}

\begin{table}\centering		 
\begin{small}
\begin{minipage}[b]{0.48\hsize}\centering  
\begin{tabular}{c}\toprule
\textbf{Encoder} \\\toprule
\textit{Input:} $x \in \sR^{3{\times}32{\times}32} $  \\ 
	Reflection Padding (3) \\  
    conv. (ker: $7{\times}7$, $32 \rightarrow 63$; stride: $1$; pad: $0$) \\
    Batch Normalization \\ 
	ReLU \\
	conv. (ker: $3{\times}3$, $63 \rightarrow 127$; stride: $2$; pad: $0$) \\
    Batch Normalization \\ 
	ReLU \\
	conv. (ker: $3{\times}3$, $127 \rightarrow 255$; stride: $2$; pad: $0$) \\
    Batch Normalization \\ 
	ReLU \\
	255-ResBlock \\
	255-ResBlock \\
	255-ResBlock \\
\bottomrule 
\end{tabular}
\end{minipage} \hfill
\begin{minipage}[b]{0.48\hsize}\centering  
\begin{tabular}{c}\toprule
\textbf{Decoder} \\\toprule
\textit{Input:} $(\psi(x),z,y)\in \sR^{256{\times}8{\times}8} $  \\ 
	256-ResBlock \\
	256-ResBlock \\
	256-ResBlock \\  
    Transp. conv. (ker: $3{\times}3$, $256 \rightarrow 128$; stride: $2$; pad: $0$) \\
    Batch Normalization \\ 
	ReLU \\
	Transp. conv. (ker: $3{\times}3$, $128 \rightarrow 64$; stride: $2$; pad: $0$) \\
    Batch Normalization \\ 
	ReLU \\
	ReflectionPadding(3) \\
	conv. (ker: $7{\times}7$, $64 \rightarrow 32$; stride: $1$; pad: $0$) \\
    Tanh \\
\bottomrule 
\end{tabular}
\end{minipage}
\end{small}
\caption{
Encoder and Decoder for the convolutional generator used for the MNIST dataset.}
\label{tab:resnet_arch}
\end{table}

\begin{table}\centering		 
\begin{small}
\begin{minipage}[b]{0.48\hsize}\centering  
\begin{tabular}{c}\toprule
\textbf{Encoder} \\\toprule
\textit{Input:} $x \in \sR^{28{\times}28} $  \\ 
    conv. (ker: $3{\times}3$, $1 \rightarrow 64$; stride: $3$; pad: $1$) \\
    LeakyReLU($0.2$) \\ 
	Max Pooling (stride: $2$) \\
	conv. (ker: $3{\times}3$, $64 \rightarrow 32$; stride: $2$; pad: $1$) \\
    LeakyReLU($0.2$) \\ 
	Max Pooling (stride: $2$) \\
\bottomrule 
\end{tabular}
\end{minipage} \hfill
\begin{minipage}[b]{0.48\hsize}\centering  
\begin{tabular}{c}\toprule
\textbf{Decoder} \\\toprule
\textit{Input:} $(\psi(x),z,y)\in \sR^{64{\times}2{\times}2} $   \\
    Transp. conv. (ker: $3{\times}3$, $64 \rightarrow 32$; stride: $2$; pad: $1$) \\
    LeakyReLU($0.2$) \\ 
	Max Pooling stride $2$ \\
	Reflection Padding (3) \\  
    Transp. conv. (ker: $5{\times}5$, $32 \rightarrow 16$; stride: $3$; pad: $1$) \\
    LeakyReLU($0.2$) \\ 
	Max Pooling stride $2$ \\
	Transp. conv. (ker: $2{\times}2$, $16 \rightarrow 1$; stride: $2$; pad: $1$) \\
    Tanh \\
\bottomrule 
\end{tabular}
\end{minipage}
\end{small}
\caption{
Encoder and Decoder for the ResNet generator used for the MNIST dataset.}
\label{tab:conv_arch}
\end{table}

\subsection{Baseline Implementation Details}
The principal baselines used in the main paper include the Momentum-Iterative Attack (MI-Attack)~\citep{dong2018boosting}, the Input Diversity (DI-Attack)~\citep{xie2019improving}, the Translation-Invariant (TID-Attack)~\citep{dong2019evading} and the Skip Gradient Method (SGM-Attack)~\citep{wu2020skip}. As Input Diversity and Translation invariant are approaches that generally can be combined with existing attack strategies we choose to use the powerful Momentum-Iterative attack as our base attack. Thus the DI-Attack consists of random input transformations when using an MI-Attack adversary while the TID-attack further adds a convolutional kernel ontop of the DI-Attack. We base our implementions using the AdverTorch~\citep{ding2019advertorch} library and adapt all baselines to this framework using original implementations where available.
In particular, when possible we reused open source code in the Pytorch library~\citep{paszke2019pytorch} otherwise we re-implement existing algorithms. We also inherit most hyperparameters settings when reporting baseline results except for number steps used in iterated attacks. We find that most iterated attacks benefit from additional optimization steps when attacking MNIST and CIFAR-10 classifiers. Specifically, we allot a $100$ step budget for all iterated attacks which is often a five to ten fold increase than the reported setting in all baselines. 

\subsection{Ensemble Adversarial Training Architectures}
\label{appendix:ens_adv_training}
We ensemble adversarially train our models in accordance with the training protocol outlined in \cite{tramer2017ensemble}.
For MNIST models we train a standard model for 6 epochs, and an ensemble adversarial model using adversarial examples from the remaining three architectures for 12 epochs. The specific architectures for Models A-D are provided in Table. 8. Similarly, for CIFAR-10 we train both the standard model and ensemble adversarial models for $50$ epochs. For computationally efficiency we randomly sample two out of three held out architectures when ensemble adversarially training the source model.

\begin{table}[ht]
    \label{appendix:mnist_ens_adv_training_table}
    \centering
          
            \begin{tabular}{ccccc}
            \toprule
                   &  A  & B  & C & D \\
                    \midrule
                   & Conv(64, 5, 5) + Relu & Dropout(0.2) & Conv(128, 3, 3) + Tanh & \multirow{2}{*}{\shortstack{FC(300) + Relu \\ Dropout(0.5)}}\\
                   & Conv(64, 5, 5) + Relu &  Conv(64, 8, 8) + Relu &  MaxPool(2,2) & \\
                   & Dropout(0.25) & Conv(128, 6, 6) + Relu  & Conv(64, 3, 3) + Tanh & \multirow{2}{*}{\shortstack{FC(300) + Relu \\ Dropout(0.5)}} \\
                   & FC(128) + Relu & Conv(128, 6, 6) + Relu & MaxPool(2,2) & \\
                   & Dropout(0.5) &  Dropout(0.5)  &  FC(128) + Relu &  \multirow{2}{*}{\shortstack{FC(300) + Relu \\ Dropout(0.5)}}\\
                   & FC + Softmax &  FC + Softmax &  FC + Softmax & \\
                   & & & & \multirow{2}{*}{\shortstack{FC(300) + Relu \\ Dropout(0.5)}} \\
                   & & & & \\
                   & & & & FC + Softmax \\
            \bottomrule
            \end{tabular}
            \caption{MNIST Ensemble Adversarial Training Architectures)}
    \end{table}

\section{Further Related Work}
Adversarial attacks can be classified under different threat models,
which impose different access and resource restrictions on the attacker \cite{akhtar2018threat}. The whitebox setting, where the attacker has full access to the model parameters and outputs, thus allowing the attacker to utilize gradients based methods to solve a constrained optimization procedure. 
This setting is more permissive than the semi-whtiebox and the blackbox setting, the latter of which the attacker has only access to the prediction \cite{papernot2016transferability,papernot2017practical} or sometimes the predicted confidence~\citep{guo2019simple}. In this paper, we focus on a challenging variant of the conventional blackbox threat model which we call the NoBox setting which further restricts the attacker by \emph{not} allowing any query from the target model. While there exists a vast literature of adversarial attacks, we focus on ones that are most related to our setting and direct the interested reader to comprehensive surveys for adversarial attacks and blackbox adversarial attacks \cite{bhambri2019survey,chakraborty2018adversarial}.

\xhdr{Whitebox Attacks}
The most common threat model for whitebox adversarial examples are $l_p$-norm attacks, where $p \in \{2, \infty \}$ is the choice of norm ball used to define the attack budget. One of the earliest gradient based attacks is the Fast Gradient Sign Method (FGSM) \cite{goodfellow2014explaining}, which computes bounded perturbations in a single step by computing the signed gradient of the loss function with respect to a clean input. More powerful adversaries can be computed using multi-step attacks such as DeepFool \cite{moosavi2016deepfool} which iteratively finds the minimum distance over perturbation direction needed to cross a decision boundary. For constrained optimization problems the Carlini-Wagner (CW) attack \cite{carlini2017towards} is a powerful iterative optimization scheme which introduces an attack objective designed to maximize the distance between the target class and the most likely adversarial class. Similarly, projected gradient
descent based attacks has been shown to be the strongest class of adversaries for $l_2$ and $l_{\infty}$ norm attacks \cite{madry2017towards} and even provides a natural way of robustifying models through adversarial training. Extensions of PGD that fix failures due to suboptimal step size and
problems of the objective function include AutoPGD-CE (APGD-CE) and AutoPGD-DLR (APGD-DLR) and leads to the state of the art whitebox attack in AutoAttack \cite{croce2020reliable} which ensembles two other strong diverse and parameter free attacks.

\xhdr{Blackbox Attacks}
Like whitebox attacks the adversarial goal for a blackbox attacker remains the same with the most common threat model also being $l_p$ norm attacks. Unlike, whitebox attacks the adversarial capabilities of the attacker is severely restricted rendering exact gradient computation impossible. In lieu of exact gradients, early blackbox attacks generated adversarial examples on surrogate models in combination with queries to the target model \cite{papernot2016transferability}. When given a query budget gradient estimation is an attractive approach with notable approaches utilizing black box optimization schemes such as Finite Differences \cite{chen2017zoo}, Natural Evolutionary Strategies \cite{ilyas2018black, jiang2019black}, learned priors in a bandit optimization framework \cite{ilyas2018prior}, meta-learning attack patterns \cite{du2019query}, and query efficient. 

\xhdr{Defenses}
In order to protect against the security risk posed by adversarial examples there have been many proposed defense strategies. Here we provide a non-exhaustive list of such methods. Broadly speaking, most defense approaches can be categorized into either robust optimization techniques, gradient obfuscation methods, or adversarial example detection algorithms \cite{xu2019adversarial}. Robust optimization techniques aim to improve the robustness of a classifier by learning model parameters by incorporating adversarial examples from a given attack into the training process \cite{madry2017towards,tramer2017ensemble,Ding2020MMA}. On the other hand obfuscation methods rely on masking the input gradient needed by an attacker to construct adversarial examples \cite{song2017pixeldefend, buckman2018thermometer, guo2017countering, dhillon2018stochastic}. 

In adversarial example detection schemes the defender seeks to sanitize the inputs to the target model by rejecting any it deems adversarial. Often this involves training auxiliary classifiers or differences in statistics between adversarial examples and clean data \cite{grosse2017statistical, metzen2017detecting, gong2017adversarial}.

\end{toappendix}

\end{document}